\newcommand{\numtokens}
{N}
\newcommand{\indextoken}
{n}
\newcommand{\E}{\mathbb{E}}
\newcommand{\diag}{\mathrm{diag}}
\newcommand{\KL}[2]{\mathrm{KL}({#1}\|{#2})}
\def \rmd {\mathrm{d}}
\def \logsnr {\textup{log-SNR}}
\Crefname{equation}{Eq.}{Eqs.}
\Crefname{figure}{Fig.}{Figs.}
\Crefname{table}{Tab.}{Tabs.}
\Crefname{appendix}{App.}{Apps.}
\Crefname{section}{Sec.}{Secs.}
\Crefname{algorithm}{Alg.}{Algs.}
\newtheorem{theorem}{Theorem}
\newtheorem{lemma}[theorem]{Lemma}
\newtheorem{proposition}[theorem]{Proposition}
\theoremstyle{definition}
\renewenvironment{proof}{\noindent\textbf{Proof}\hspace*{1em}}{\qed\\}
\newenvironment{proofof}[1]{\noindent\textbf{Proof of {#1}}\ }{\hfill\qed\\}
\newenvironment{talign*}
 {\csname align*\endcsname}
 {\endalign}
\newenvironment{talign}
 {\csname align\endcsname}
 {\endalign}
\def\textsum{{\textstyle\sum}} %
\title{Simplified and Generalized\\
Masked Diffusion for Discrete Data}
\author{%
  Jiaxin Shi\thanks{Equal contribution. Correspondence to: \texttt{jiaxins@google.com}.} , Kehang Han\footnotemark[1] , Zhe Wang, Arnaud Doucet, Michalis K. Titsias \\
  Google DeepMind \\
}
\begin{document}

\maketitle

\begin{abstract}
Masked (or absorbing) diffusion is actively explored as an alternative to autoregressive models for generative modeling of discrete data. 
However, existing work in this area has been hindered by unnecessarily complex model formulations and unclear relationships between different perspectives, leading to suboptimal parameterization, training objectives, and ad hoc adjustments to counteract these issues.
In this work, we aim to provide a simple and general framework that unlocks the full potential of masked diffusion models. 
We show that the continuous-time variational objective of masked diffusion models is a simple weighted integral of cross-entropy losses. Our framework also enables training generalized masked diffusion models with state-dependent masking schedules. When evaluated by perplexity, our models trained on OpenWebText surpass prior diffusion language models at GPT-2 scale and demonstrate superior performance on 4 out of 5 zero-shot language modeling tasks.
Furthermore, our models vastly outperform previous discrete diffusion models on pixel-level image modeling, achieving 2.75~(CIFAR-10) and 3.40 (ImageNet 64$\times$64) bits per dimension that are better than autoregressive models of similar sizes.
Our code is available at \url{https://github.com/google-deepmind/md4}.
\end{abstract}

\section{Introduction}
Since their inception \citep{sohl2015deep,ho2020denoising,song2020score}, diffusion models have emerged as the workhorse for generative media, achieving state-of-the-art in tasks such as image synthesis~\citep{rombach2022high,ramesh2022hierarchical,saharia2022photorealistic}, audio~\citep{chen2021wavegrad,kong2021diffwave} and video generation~\citep{ho2022video,villegas2023phenaki,bar2024lumiere,openai2024sora,bao2024vidu}. 
The majority of existing successes are for continuous state space diffusions. While diffusion models have been extended to discrete state spaces \citep{sohl2015deep,austin2021structured,hoogeboom2021argmax} and have been successfully applied to applications ranging from graph generation \citep{vignac2023digress}, text-to-sound generation \citep{yang2023diffsound} or protein design \citep{gruver2024protein}, they remain not as widely used as their continuous counterparts as they are not competitive with autoregressive models in important domains such as text modeling. This has motivated the development of continuous space diffusion models where the discrete data are embedded in the Euclidean space \citep{dieleman2022continuous,chen2022analog,li2022diffusion,gulrajani2024likelihood,lovelace2024latent} or the simplex~\citep{richemond2022categorical,avdeyev2023dirichlet,graves2023bayesian,xue2024unifying,liu2024mirror}. We believe that one of the reasons for the limited success of discrete diffusions is that they have been hindered by fairly complex formulations and training objectives. This paper is a step towards closing this gap.

In this work, we focus on ``masked'' (or ``absorbing'') diffusions, a discrete diffusion formulation first presented by \citet{austin2021structured}, and later explored by the literature from various perspectives
\citep{campbell2022continuous,sunscore,zheng2023reparameterized,lou2023discrete}. We follow here a continuous-time framework which has proven very useful to improve the training and understanding of continuous state space diffusions \citep[see e.g.,][]{song2020score,kingma2021variational,karras2022elucidating}. We make several technical contributions which simplify the training of these models and improve significantly their performance. Our contributions are as follows:
\begin{itemize}[wide=5pt]
    \item Using elementary arguments, we establish several properties for the forward process induced by this model and its corresponding time reversal,
    improving our understanding of this model class.
    \item We provide a remarkably simple expression of the Evidence Lower Bound (ELBO) for masked diffusion models, showing that it corresponds to a weighted integral over time of cross-entropy losses. Similarly to continuous space diffusions \citep{kingma2021variational}, this objective can be rewritten in terms of signal-to-noise ratio and exhibits invariance properties.
    \item We develop a unifying understanding of previously proposed continuous-time discrete diffusion models~\citep{campbell2022continuous,benton2024denoising,lou2023discrete}, revealing the changes they made to our ELBO objective and/or model parameterization. 
    We show that these changes either lead to expensive model evaluations, or large variance in training, or breaking the consistency between forward and reverse processes. 
    \item On GPT-2 scale text modeling and pixel-level image modeling tasks, masked diffusions trained using our simple ELBO objective outperform previous proposals, leading to the best likelihood and zero-shot transfer performance among discrete diffusion models.
    \item Finally, based on our simplified masked diffusion formulation, we propose a generalized masked diffusion model that allows state-dependent masking schedules. This generalized masked diffusion model further improves predictive performance measured by test likelihoods. 
\end{itemize}

Concurrent work by \citet{ou2024} and \citet{sahoo2024simple} derives a similar simplified expression of the ELBO. \citet{ou2024}'s derivation relies on an observation similar to the one we made in \Cref{prop:score-mean-relation}.

\section{Masked Diffusion
\label{sec:masked_diffusion}
}
Consider a sentence where we progressively replace each word with a special mask token, transforming the sentence into a sequence of masks.
Our goal is to train a generative model that reverses this process, effectively turning a sentence of masks back into meaningful text.
More formally, assume our data consists of tokens from a finite discrete state space with $m$ possible states, represented by integers
$0, 1, \dots, m - 1$ and
their corresponding one-hot vectors $e_0, e_1, \dots, e_{m - 1}$. To accommodate the masking process, we augment this space with an additional mask state, denoted by the index $m$.
The masking process transitions each token to
the mask state at a random time.
This process, known as the forward process, is applied independently to each token (e.g., each word), progressively converting the data into a sequence of mask tokens. 
By learning to reverse this masking process, we create a generative model capable of producing coherent discrete data. 

\paragraph{Discrete-time forward process.} 
We start with the case of a single token and later expand to multiple dimensions.
We define the forward process as a Markovian sequence of discrete random variables $x_t$ indexed by time $t$, where $t$ runs from 0 to 1. 
Throughout the work, we abuse the notation such that $x_t$ can be either an integer or its corresponding one-hot vector, whenever it is clear from the context. 
We divide $[0, 1]$ into $T$ intervals, and let $s(i) = (i - 1)/T$, $t(i) = i / T$. 
Following \citet{austin2021structured}, the state transition between $[s(i), t(i)]$ is determined by a transition matrix of size $(m + 1) \times (m + 1)$:
    $Q_i = (1 - \beta_i)I + \beta_i \mathbf{1} e_m^\top,$
where $\mathbf{1}$ is an all-one vector of size $m + 1$, $e_m$ represents a one-hot vector where element at index $m$ is 1. 
Each entry $[Q_i]_{jk}$ denotes the probability of transition from the state $j$ to the state $k$:
\begin{align*}
    [Q_i]_{jk} = q(x_{t(i)} = k|x_{s(i)} = j) = (1 - 
    \beta_i)\delta_{jk} + \beta_i \delta_{km}.
\end{align*}
This means that, with probability $1 - \beta_i$, $x_{t(i)}=x_{s(i)}$, otherwise it jumps to the mask state. 
Given the above transition matrix, the marginal distribution at time $t(i)$ given $x_0$ is 
\begin{align*}
    q(x_{t(i)}|x_0) = \mathrm{Cat}(x_{t(i)}; \bar{Q}_i^\top x_0) = x_0^\top \bar{Q}_i x_{t(i)}.
\end{align*}
Here, we use $\mathrm{Cat}(x;p)$ to denote a Categorical distribution where $p$ is the vector of probabilities of being in each category, and $\bar{Q}_i \triangleq \prod_{j=1}^i Q_j
= \alpha_i I + \big(1 - \alpha_i \big)\mathbf{1}e_m^\top$ for $\alpha_i = \prod_{j=1}^i (1 - \beta_j)$. 
We expect $\alpha_T$ to become very small or zero for a sufficiently large $T$ such that $q(x_1|x_0)$ for any $x_0$ will become a delta mass at the mask state. 

\paragraph{Continuous-time limit.} 
We can define a continuous-time forward process by taking a limit of the above discrete-time process. We first specify a continuous function $\beta(t)$ such that 
$\beta_i = \beta(t(i))/T$. 
We then let $T \to \infty$ in the discrete-time process and compute the limit of $\bar{Q}_i$ (proved in \citealt[Appendix A.6]{austin2021structured}, see also \Cref{app:discrete-time-derivation}%
) as
\begin{align}
    \bar{Q}(t) \triangleq \lim_{T\to \infty} \bar{Q}_i 
    &= \alpha_t I + (1 - \alpha_t)\mathbf{1}e_m^\top, \text{ where } \alpha_t \triangleq \exp\Big(-\int_0^{t} \beta(s) \rmd s\Big),  \label{eq:bar-Qt-from-discrete} 
\end{align}
so that $q(x_t|x_0) = \mathrm{Cat}(x_t; \bar{Q}(t)^\top x_0)$. 
For two arbitrary times, $0 \leq s < t \leq 1$, the transition distribution that is compatible with the above marginal %
(i.e., $q(x_t|x_0) = \sum_{x_s} q(x_t|x_s)q(x_s|x_0)$) is 
\begin{align*} %
    q(x_t|x_s) &= \mathrm{Cat}(x_t; \bar{Q}(s, t)^\top x_s), 
    \text{ where } \bar{Q}(s, t) \triangleq \bar{Q}(s)^{-1}\bar{Q}(t) = \frac{\alpha_t}{\alpha_s} I + \big(1 - \frac{\alpha_t}{\alpha_s}\big)\mathbf{1}e_m^\top. 
\end{align*}
Note that \citet{austin2021structured} did not derive this explicit form of transition matrix between two arbitrary time $s$ and $t$, which appeared later in \citet{zhao2024improving} concurrently with our work.

\begin{figure}[t]
    \centering
    \includegraphics[width=0.94\textwidth]{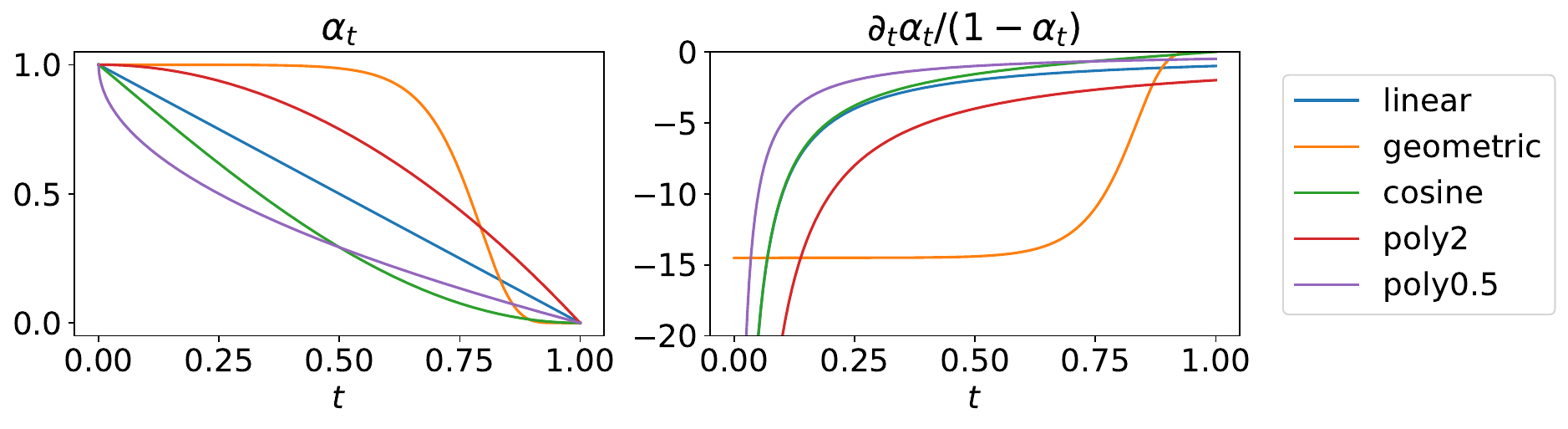}
    \caption{Masking schedules in the literature: (Left) $\alpha_t$; (Right) weight of the cross-entropy loss w.r.t. $t$; Equations for these schedules are given in \Cref{tab:noise-schedules} in Appendix. }
    \label{fig:noise-schedules}
\end{figure}

\paragraph{Masking schedules. } 
From the definition of $\alpha_t$, we have that $\alpha_0 = 1$. 
And similar to the discrete-time formulation, we would like $\alpha_1$ be zero or very close to zero. 
We provide a summary of masking schedules from literature that satisfy these properties in \Cref{fig:noise-schedules}. 
The linear schedule was proposed in \citet{sohl2015deep} for binary variables and then re-derived by \citet{austin2021structured} from mutual information for discrete-time models. The geometric schedule $\alpha_t$ 
is plotted for $\bar{\beta}_{\text{min}} = 10^{-5}$ and $\bar{\beta}_{\text{max}} = 20$. It was first used for continuous diffusions \citep{song2020score} and then for discrete %
by \citet{lou2023discrete}. 
The cosine schedule was originally proposed in MaskGIT~\citep{chang2022maskgit}, an iterative unmasking generative model inspired by diffusion. 
This schedule has the property of slowing down the unmasking process at the beginning of the reverse generation. 
Aligning with their observation, we find that this results in a lower chance of conflicting tokens being unmasked simultaneously at the start of generation, thereby enhancing the overall generation quality.

\paragraph{Time reversal of the forward process given $x_0$.}

The analytic property of our forward process allows to compute many quantities of interest in closed form. 
One such quantity frequently used in diffusion models is the time reversal of the forward process given $x_0$: $q(x_s|x_t, x_0)$ for $s\leq t$. 
We %
derive it in \Cref{app:time-reversal-x0} as
\begin{align*} %
    q(x_s|x_t, x_0) = \mathrm{Cat}(x_s; \bar{R}^{x_0}(t, s)^\top x_t), \text{ where } \bar{R}^{x_0}(t, s) = I + \frac{\alpha_s - \alpha_t}{1 - \alpha_t}e_m(x_0 - e_m)^\top.
\end{align*}
From the transition matrix $\bar{R}^{x_0}(t, s)\in \mathbb{R}^{(m+1)\times (m+1)}$ 
we can see the reverse process conditioned on $x_0$ has a very simple logic---if $x_t$ is a mask, with probability $\frac{\alpha_s - \alpha_t}{1 - \alpha_t}$, it will jump to the state $x_0$ at time $s$, otherwise it will stay masked. Once $x_t$ is unmasked, it remains in the same state until the end.

\section{Model and Objective
\label{sec:model_objective}
}

For a discrete-time masked diffusion process, 
we define our generative model by approximately reversing the forward transitions using a reverse model $p_\theta(x_s|x_t)$.  
One %
way to define this %
model is 
\begin{align} \label{eq:backward-model}
    p_\theta(x_s|x_t) \triangleq q(x_s|x_t, \mu_\theta(x_t, t)),
\end{align}
where $\mu_\theta(x_t, t) \in \Delta^{m+1}$ is a probability vector parametrized by a neural network $f_\theta$ with a softmax applied to the output logits (note the $m$-th output is %
forced to 0 since the clean data cannot be masks):
\begin{align} \label{eq:hat-x0}
    \mu_\theta(x_t, t) = \begin{cases} 
    \mathrm{softmax}(f_\theta(x_t, t)) & x_t = m, \\
    x_t & x_t \neq m.
    \end{cases}
\end{align}
This is known as mean-parameterization since it leverages a prediction model for the mean of $x_0$. A matrix-form depiction of $p_\theta(x_s | x_t)$ is shown in \Cref{fig:reverse-transition-matrix} (right). 
In fact, we can select a time-invariant parametrization $\mu_\theta(x_t, t)=\mu_\theta(x_t)$ as \cite{ou2024} showed that $p(x_0|x_t)$ given $x_t = x$ is identical for any $t$.  

Besides $p_\theta(x_s|x_t)$, we also need to specify $p(x_0|x_{t(1)})$ and the prior distribution $p(x_{t(T)})=p(x_1)$. 
Following the practice in continuous diffusion models~\citep{kingma2021variational}, we choose $p(x_0|x_{t(1)}) \propto q(x_{t(1)}|x_0)$. 
And since $q(x_1|x_0) \approx %
\delta_{x_1, m}$ for any $x_0$ as $\alpha_1\approx 0$, we set $p(x_1) \approx %
\delta_{x_1, m}$, see \Cref{app:undefined-kl}. 

We then write out the discrete-time diffusion model objective~\citep{sohl2015deep,ho2020denoising}, which is a lower bound of the log marginal likelihood of data $x_0$ under the model $p$ (known as the Evidence Lower Bound, or ELBO):
\begin{align*}
    \log p(x_0) &\geq 
    \E_{q(x_{t(1)}|x_0)} [\log p(x_0|x_{t(1)})] - \KL{q(x_1|x_0)}{p(x_1)} 
    - \mathcal{L}_T,
\end{align*}
where $\mathcal{L}_T = \sum_{i=2}^T \E_{q(x_{t(i)}|x_0)} [\KL{q(x_{s(i)}|x_{t(i)}, x_0)}{p_\theta(x_{s(i)}|x_{t(i)})}]$.
For the above choices of the prior distribution, the term $\KL{q(x_1|x_0)}{p(x_1)}$ becomes zero. 
Under the reverse model \eqref{eq:backward-model}, the KL divergence terms in $\mathcal{L}_T$ becomes (proof in \Cref{app:elbo})
\begin{align*}
    \KL{q(x_s|x_t, x_0)}{p_\theta(x_s|x_t)} 
    = -\frac{\alpha_s - \alpha_t}{1 - \alpha_t}\delta_{x_t, m} \cdot x_0^\top \log \mu_\theta(x_t, t), %
\end{align*}
which is a simple cross-entropy loss between the predicted logits and the clean data. 
In \Cref{app:elbo}, we show that $\mathcal{L}_T$ is a Riemann sum and is lower bounded by the corresponding continuous integral:
\begin{align} 
    \mathcal{L}_\infty &\triangleq \lim_{T\to \infty} \mathcal{L}_T 
    = \int_{t(1)}^1 \frac{\alpha_t'}{1 - \alpha_{t}} \E_{q(x_{t}|x_0)}\left[\delta_{x_t, m}\cdot x_0^\top \log \mu_\theta(x_{t}, t)\right] \rmd t,
    \label{eq:diff-kl-limit}
\end{align}
where $\alpha'_t$ denotes the derivative of $\alpha_t$ with respect to $t$.
Therefore, we can obtain an ELBO that is tighter than that of any finite $T$ by pushing $T\to \infty$. 
This ELBO can be further simplified by letting $t(1) \to 0$. 
As a result, $\E_{q(x_{t(1)}|x_0)}[\log p(x_0|x_{t(1)})]$ goes to $0$ and the ELBO becomes $-\mathcal{L}_\infty$.

For continuous state-space diffusions, the ELBO depends on the signal-to-noise ratio (SNR) at its endpoints but is otherwise invariant to the noise schedule \citep{kingma2021variational}. We establish here a similar result for discrete diffusions. 
Consider choosing $\alpha_t = \sigma(\lambda_t)$, where $\sigma$ represents the sigmoid function $\sigma(x) = \frac{1}{1 + e^{-x}}$. 
In this context, the \logsnr~is defined by $\lambda_t = \log \frac{\alpha_t}{1 - \alpha_t}=\logsnr(t)$. 
By making a change of variables in \eqref{eq:diff-kl-limit} to make everything a function of the \logsnr, we obtain 
\begin{equation}
    \mathcal{L}_{\infty} 
    = \int_{\lambda_{t(1)}}^{\lambda_1} \sigma(\lambda)  \E_{\tilde{q}(x_{\lambda}|x_0)}\left[\delta_{x_\lambda,m}
    \cdot x_0^\top\log \tilde{\mu}_\theta(x_{\lambda},\lambda)\right] \rmd \lambda. 
    \nonumber
\end{equation}
where $\tilde{\mu}_\theta(x,\lambda):=\mu_\theta(x,t)$ and $\tilde{q}(x_\lambda|x_0):=q(x_t|x_0)$ for $t=\logsnr^{-1}(\lambda)$. This shows that the only effect $\alpha_t$ has on the loss is through the values of the SNR at the endpoints.
Still, because we draw uniform samples of $t$ to estimate the %
integral, the choice of masking schedule affects the %
variance. 
\paragraph{Multidimensional data.}
In the previous sections, $x_t$ was assumed to be a single discrete token. 
To extend the method to multidimensional data, let $x_t$ be now %
a sequence $(x_t^{(1)}, x_t^{(2)}, \dots, x_t^{(\numtokens)})$,
where each %
element $x_t^{(\indextoken)}$ 
represents a discrete token. We select a forward process which factorizes across all 
 $\numtokens$ tokens: $q(x_t|x_s) = \prod_{\indextoken=1}^\numtokens q(x_t^{(\indextoken)}|x_s^{(\indextoken)})$. 
As a result, the forward marginals $q(x_t|x_0)$ and reversal $q(x_s|x_t, x_0)$ also factorize. 
In this case, we define the reverse model as 
$p_\theta(x_s|x_t) \triangleq \prod_{\indextoken=1}^\numtokens q(x_s^{(\indextoken)}|x_t^{(\indextoken)}, \mu_\theta^{(\indextoken)}(x_t, t))$, where $\mu_\theta(x_t, t)$ is a neural network that takes the full $\numtokens$ tokens as input and 
outputs $\numtokens$ probability vectors.\footnote{We intentionally choose the reverse model to factorize across dimensions because the true reverse transition $q(x_s|x_t)$ factorizes in the continuous-time limit (as $s$ approaches $t$).}
The $\indextoken$-th output $\mu_\theta^{(\indextoken)}(x_t, t)$ is a prediction model for $\mathbb{E}[x_0^{(\indextoken)}|x_t]$, the mean value of the $\indextoken$-th token.
Repeating above derivations gives 
\begin{align}
    \mathcal{L}_\infty^{(\numtokens)} &\triangleq \int_0^1 \frac{\alpha_t'}{1 - \alpha_{t}} \E_{q(x_{t}|x_0)}\Big[\textsum_{\indextoken: x_t^{(\indextoken)}=m} (x_0^{(\indextoken)})^\top \log \mu_\theta^{(\indextoken)}(x_{t}, t)\Big] \rmd t. 
    \label{eq:multi-diff-kl-limit}
\end{align} 
We term our simple masked diffusion model trained with loss \eqref{eq:multi-diff-kl-limit} \textbf{MD4} (Masked Discrete Diffusion for Discrete Data).
A single step of MD4 training algorithm is described in \Cref{alg:training} in Appendix.

\section{Sampling}
\label{sec:sampling}

We use ancestral sampling from our discrete-time reverse process for generation. %
We have found this yields slightly higher sample quality compared to other methods such as Euler discretization~\citep{campbell2022continuous,lou2023discrete}. 
For conditional generation tasks such as infilling, we find that the simple approach works best ---  we keep the conditioning tokens unmasked throughout the generation process. 
A complete description of the sampling algorithm can be found in \Cref{alg:sampling} in Appendix. 

\paragraph{Impact of schedules and discretization. }

For comparing different sampling configurations, we primarily use the FID score~\citep{heusel2017gans} on image datasets as our evaluation metric. We favor it over text generative perplexity\footnote{Perplexity of generated samples scored by a large language model such as GPT-2.} used in prior work~\citep{lou2023discrete}, as the latter can be misleadingly reduced by lowering sample diversity~\citep{holtzman2019curious}.
We initially trained our model using the linear schedule, which achieves the best final ELBO overall; however, we found that sampling did not perform well with a standard uniform discretization grid $t(i) = \frac{i}{T}$.
We hypothesize that time discretization can lead to conflicts by generating multiple tokens in a single step.
We then switched to the cosine schedule (\Cref{tab:noise-schedules}) that slows down unmasking at the beginning of reverse process. 
This drastically improves the FID on ImageNet 64$\times$64 from 70 to 17 for $T=256$ steps (\Cref{fig:imagenet-fid}, left). 
Building on this observation, we suggest using a ``cosine'' discretization grid for sampling in models trained with a linear schedule:
\begin{align}
    t(i) = \cos\Big(\frac{\pi}{2} \big(1 - \frac{i}{T}
     \big)\Big).
\end{align}
This induces the same discretization in $\alpha_t$ as the cosine schedule with a uniform grid, leading to comparable sample quality, as shown in Fig. 2 (left). %
In \Cref{fig:imagenet-fid} (right), we plot the number of tokens unmasked per step for linear and cosine schedules with a uniform grid. We believe the cosine schedule performs better because it leverages information redundancy: with more tokens revealed, the remaining tokens become more predictable, reducing conflicts when unmasking them in a single step. 

\begin{figure}[t]
    \centering
    \includegraphics[width=0.46\textwidth]{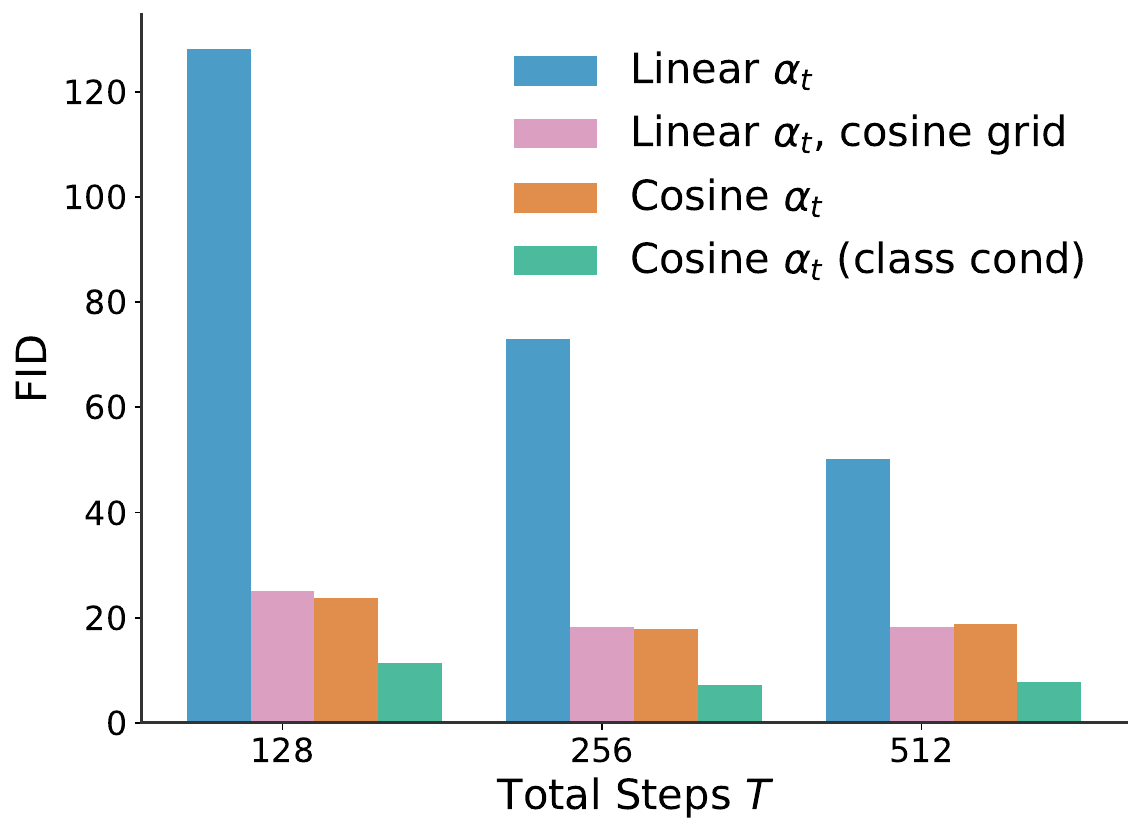}
    \includegraphics[width=0.46\textwidth]{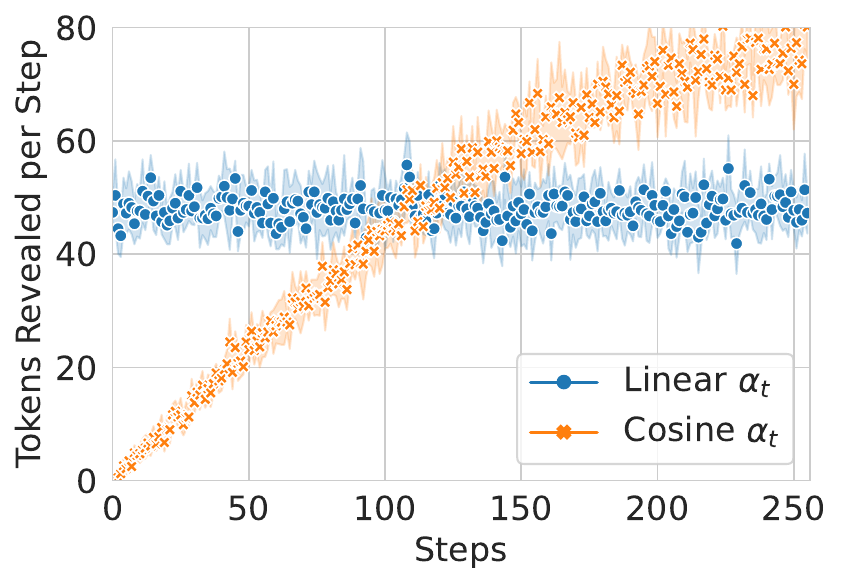}
    \caption{Left: FID evaluation for 50k samples randomly generated from MD4 on pixel-level modeling of ImageNet 64$\times$64 (numbers in \Cref{tab:imagenet-fid}). Right: Number of tokens revealed per generation step ($T=256$). Each image consists of $64\times 64\times 3 = 12288$ tokens. } %
    \label{fig:imagenet-fid}
\end{figure}

Although these findings were originally developed on images, we find them translate well to text~(see \Cref{fig:conditional-sample-generations}). 
we expect other techniques such as top-$p$ sampling~\citep{holtzman2019curious}, classifier-free guidance~\citep{ho2022classifier,nisonoff2024unlocking}, and predictor-correctors~\citep{campbell2022continuous,zhao2024informed} to further improve sample quality of our models. 
While we reserve these for future work, we note that the JAX~\citep{jax2018github} implementation of categorical sampling implicitly truncates small probabilities, creating a similar effect to top-$p$ sampling. See \Cref{app:jax-implicit-topp} for details.

\section{Relation to Existing Work}
\label{sec:related}

We discuss how to unify several existing masked diffusion models using our framework.  

\paragraph{Continuous-Time Markov Chains (CTMC).} 
To show the connection with the CTMC view presented in \citet{austin2021structured,campbell2022continuous}, 
we can %
write out the forward and reverse masked diffusion using CTMC machinery.
To see this, for a short time $\Delta t$, 
given $x_0$, the Taylor expansions of our forward and reverse transition matrices at $t$ are
\begin{align}
    \bar{Q}(t, t + \Delta t) &= I + Q(t)\Delta t + o(\Delta t) \text{\quad for\quad  } Q(t) \triangleq \beta(t)(\mathbf{1}e_m^\top - I), \\
    \bar{R}^{x_0}(t, t - \Delta t) &= I + R^{x_0}(t) \Delta t + o(\Delta t) \text{\quad for\quad } R^{x_0}(t) \triangleq -\frac{\alpha_t'}{1-\alpha_t} e_m(x_0 - e_m)^\top, \label{eq:rev-rate-x0}
\end{align}
where $Q(t)$ and $R^{x_0}(t)$ are known as the \emph{transition rate} matrices. 
\citet{austin2021structured} derived the same $Q(t)$ in App. A.6 of their paper. 
However, they did not explore the reverse process or a continuous-time objective. \citet{campbell2022continuous} 
derived an alternative ELBO expression using rate matrices, which \citet{kitouni2023disk} further simplified for absorbing diffusion.
In \Cref{app:ctmc}, we show how to recover their expression
by separating out a constant from our ELBO  expression \eqref{eq:diff-kl-limit} and applying a discrete ``integration-by-part''.
A key limitation of their expression
is that it needs $N$ evaluations of the prediction model $\mu_\theta(\cdot, t)$ to compute an inner summation. %
To circumvent this computational burden, they %
used a doubly stochastic estimate.  However, this leads to significantly higher variance compared to the analytic cross-entropy \eqref{eq:diff-kl-limit} which only requires one pass of $\mu_\theta(\cdot, t)$. 
Please refer to \Cref{app:campbell-diff} for more details. 

\paragraph{Score parameterization.}
While so far we used a prediction model $\mu_\theta(x_t, t)$ for the mean of clean data given $x_t$ (i.e., mean parameterization), one can choose other ways of parameterizing the reverse model. 
\citet{benton2024denoising,lou2023discrete} proposed to parameterize the discrete ``score'' $s(x_t, t)_j \triangleq \frac{q_t(j)}{q_t(x_t)}$
and introduced a score-based loss for discrete diffusions.
In \Cref{app:score}, we provide an alternative derivation of their loss
which is %
simpler. 
We show the link between score and mean parameterizations through the following proposition. 

\begin{proposition}[Score Parameterization vs. Mean Parameterization] \label{prop:score-mean-relation}
    Let $q_t$ be the marginal distribution of the masked diffusion defined in \Cref{sec:masked_diffusion} at time $t$. The discrete score $s(x_t, t)_j = \frac{q_t(j)}{q_t(x_t)}$ for a mask state $x_t = m$ and $j\neq m$ can be expressed as
    \begin{align} \label{eq:score-mean-rel}
        s(m, t)_j = \frac{\alpha_t}{1 - \alpha_t} \E[x_0|x_t=m]^\top e_j \text{, which satisfies } \sum_{j\neq m} s(m, t)_j = \frac{\alpha_t}{1 - \alpha_t}.
    \end{align}
\end{proposition}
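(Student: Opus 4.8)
The plan is to compute the time-$t$ marginals $q_t$ explicitly by integrating the forward kernel $q(x_t\mid x_0)=\mathrm{Cat}(x_t;\bar{Q}(t)^\top x_0)$ against the data distribution, and then read off the score as a ratio of two such marginals. First I would use the explicit block form $\bar{Q}(t)=\alpha_t I+(1-\alpha_t)\mathbf{1}e_m^\top$ to evaluate the kernel entries: since clean data is never a mask, $q(x_t=m\mid x_0)=1-\alpha_t$ for \emph{every} $x_0$, while for $j\neq m$ one has $q(x_t=j\mid x_0)=\alpha_t\,\delta_{x_0,j}$. Marginalizing over the data distribution $q_0$ on $x_0$ then gives $q_t(m)=1-\alpha_t$ and $q_t(j)=\alpha_t\,q_0(j)$ for $j\neq m$, where $q_0(j)$ denotes the probability that the clean data equals state $j$.

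Forming the ratio immediately yields $s(m,t)_j=q_t(j)/q_t(m)=\frac{\alpha_t}{1-\alpha_t}\,q_0(j)$. The remaining step is to identify the prior probability $q_0(j)$ with the $j$-th coordinate of the posterior mean $\E[x_0\mid x_t=m]$. I would establish this via Bayes' rule: $q(x_0=j\mid x_t=m)=q(x_t=m\mid x_0=j)\,q_0(j)/q_t(m)=(1-\alpha_t)q_0(j)/(1-\alpha_t)=q_0(j)$. Thus the posterior over $x_0$ given a masked observation collapses to the prior, and since $\E[x_0\mid x_t=m]^\top e_j=q(x_0=j\mid x_t=m)$, we obtain $s(m,t)_j=\frac{\alpha_t}{1-\alpha_t}\,\E[x_0\mid x_t=m]^\top e_j$, which is the first claim.

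For the normalization identity, I would sum over $j\neq m$ and pull out the common factor: $\sum_{j\neq m}s(m,t)_j=\frac{\alpha_t}{1-\alpha_t}\sum_{j\neq m}\E[x_0\mid x_t=m]^\top e_j=\frac{\alpha_t}{1-\alpha_t}$, using that $\E[x_0\mid x_t=m]$ is a probability vector supported on the non-mask states (its $m$-th entry vanishes because clean data cannot be a mask, so the remaining entries sum to $1$).

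The computation is short, so there is no serious analytic obstacle; the one conceptual point worth isolating --- and the step I would be most careful about --- is the observation that the mask state is completely uninformative, which is what makes the posterior equal the prior. Everything else is bookkeeping with the explicit form of $\bar{Q}(t)$, and the main things to watch are keeping the $j=m$ and $j\neq m$ cases separate and remembering that $x_0$ ranges only over non-mask states.
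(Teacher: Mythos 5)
Your proof is correct, but it takes a genuinely different route from the paper's. The paper never computes the marginals $q_t$ explicitly: it writes $\frac{q_t(j)}{q_t(m)} = \frac{\sum_{x_0} q_{t|0}(j|x_0)q(x_0)}{q_t(m)}$, applies Bayes' rule in the form $\frac{q(x_0)}{q_t(m)} = \frac{q_{0|t}(x_0|m)}{q_{t|0}(m|x_0)}$ to turn the ratio into the posterior expectation $\E_{x_0|x_t=m}\big[\frac{q_{t|0}(j|x_0)}{q_{t|0}(m|x_0)}\big]$, and then substitutes the conditional-ratio identity $\frac{q_{t|0}(j|x_0)}{q_{t|0}(m|x_0)} = \frac{\alpha_t}{1-\alpha_t}x_0^\top e_j$. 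You instead compute $q_t(m)=1-\alpha_t$ and $q_t(j)=\alpha_t q_0(j)$ directly, take the ratio, and then identify $q_0(j)$ with the posterior mean coordinate by observing that the posterior collapses to the prior, $q_{0|t}(j|m)=q_0(j)$, because the masking probability is the same for every token value. Your route is more elementary and makes explicit something the paper's proof leaves hidden: for a single token under a state-independent schedule, a mask observation is completely uninformative, so $\E[x_0|x_t=m]$ is literally the data marginal $q_0$ and the proposition is in this setting a repackaging of $q_t(j)=\alpha_t q_0(j)$, $q_t(m)=1-\alpha_t$ (you also spell out the normalization claim, which the paper treats as immediate). What the paper's manipulation buys is robustness: it only uses the conditional ratio $q_{t|0}(j|x_0)/q_{t|0}(m|x_0)$ and never the posterior-equals-prior collapse, so the same argument pattern carries over to situations where the mask \emph{is} informative --- e.g., the state-dependent schedules of \Cref{sec:dependent_rates}, where the masking rate depends on the token value, or the multi-token setting where the model conditions on the whole sequence --- whereas your key step is specific to state-independent masking of a single token. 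Both proofs are complete and correct for the stated proposition.
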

\Cref{prop:score-mean-relation} (proved in \Cref{app:score}) implies that a reasonable %
score model for a mask state %
is 
\begin{align} \label{eq:score-mean-param}
    s_\theta(m,t)_j = \frac{\alpha_t}{1 - \alpha_t} \mu_\theta(m, t)_j. 
\end{align}
Indeed, substituting \eqref{eq:score-mean-param} into the score-based loss of \citet{lou2023discrete,benton2024denoising}
recovers our objective \eqref{eq:diff-kl-limit}. 
In \citet{lou2023discrete}, the score is parameterized as a neural network without enforcing the constraint in \eqref{eq:score-mean-rel}. 
This means the learned reverse model can be incompatible with the forward process. 
We find that our parameterization, which enforces the constraint, leads to more stable training and better results.

\paragraph{Any-order autoregressive models.} 
The continuous-time reverse process of our masked diffusion model can be viewed as an any-order autoregressive model (AO-ARM)~\citep{uria2014deep}. 
To see this, we reorder the tokens according to the timing of their unmasking events in the reverse process. 
For all tokens, the cumulative distribution functions (CDFs) of unmasking times $\{\tau_n\}_{n=1}^N$ are identical and satisfy
$P(\tau_n \leq t) = P(x_t^{(n)} = m) = 1 - \alpha_t$. 
As a result, the ordering is uniformly random across all possible arrangements, and the token prediction during each unmasking event represents a prediction step in AO-ARMs.
This connection was initially pointed out in \citet[][App. C]{hoogeboom2021autoregressive}. 
The relation between our simplified ELBO \eqref{eq:multi-diff-kl-limit} and the AO-ARM objective is independently clarified by \citet{ou2024}. 
Despite this equivalence, our work 
demonstrates that the masking schedule $\alpha_t$ introduces a new degree of freedom in the design of such models. 
Variations in $\alpha_t$ can lead to different distributions of unmasking times, significantly impacting performance in diffusion-style parallel sampling under time discretization, as shown in \Cref{fig:imagenet-fid}. 

\paragraph{Other related work.} Due to space constraint, we defer the discussion on other related work, including MaskGIT~\citep{chang2022maskgit}, discrete flow matching~\citep{campbell2024generative}, SDDM~\citep{sunscore}, Blackout diffusion~\citep{santos2023blackout} and SUNDAE~\citep{savinov2021step}, to \Cref{app:other-related}.

\section{Generalization to State-dependent Masking Schedules
\label{sec:dependent_rates}
}

Consider a scenario where some tokens hold more significance than others and we would like to unmask them earlier in the process. To achieve this, we introduce state-dependent masking schedules, where the probability of unmasking a token depends not only on time, but also on the token's value.

We first define the forward process for a single token $x_t$.
Let $\alpha_t$ be a $m +1$ dimensional vector function, i.e.,  
there is a different function $\alpha_{t,i}$
for each possible value $i$ of the token $x_t$. Also, by vector $\frac{\alpha_t}{\alpha_s}$ we denote the element-wise division
of the two vectors.  
We define the forward transition as 
$q(x_t|x_s) = \mathrm{Cat}(x_t; \bar{Q}(s, t)^\top x_s)$ where
\begin{align*}
    \bar{Q}(s, t) = \diag\Big(\frac{\alpha_t}{\alpha_s}\Big)  + \Big(I - \diag\Big(\frac{\alpha_t}{\alpha_s}\Big)\Big)\mathbf{1}e_m^\top
\end{align*}
and $\diag\big(\frac{\alpha_t}{\alpha_s}\big)$ is a diagonal matrix with the vector $\frac{\alpha_t}{\alpha_s}$ in its diagonal.
The probability of moving from current state $x_s$ to a future state $x_t$ (either the same as $x_s$ or mask) is determined by a state-dependent rate $\big( \frac{\alpha_t}{\alpha_s} \big)^\top x_s$, while 
the marginal at time $s$ given $x_0$ is 
$$
q(x_s|x_0) = \mathrm{Cat}(x_s; \bar{Q}(s)^\top x_0) \text{\quad for\quad } \bar{Q}(s) = \diag(\alpha_s) + (I - \diag(\alpha_s))\mathbf{1}e_m^\top.
$$
Further, for any time
$0\leq s < t \leq 1$ it holds that $q(x_t|x_0) = \sum_{x_s} q(x_t|x_s) q(x_s|x_0)$
so the above is a valid continuous-time Markov chain. 
Given the forward conditionals and marginals, we can now compute the time reversal conditioned on $x_0$.
The full form of $q(x_s|x_t, x_0)$ is derived in \Cref{app:state-dependentfirst}. 
For  $x_t= m$, we have 
\begin{talign}
q(x_s | x_t = m, x_0) = 
q(x_s | x_t = m, \textcolor{blue}{x_0}, \textcolor{red}{x_0 x_0^\top})
= \Big(\frac{{\bf 1} - \alpha_s
}
{ {\bf 1} -  \alpha_t}\Big)^\top \textcolor{blue}{x_0} e_m^\top x_s 
+ \Big(\frac{\alpha_s - \alpha_t}
{ {\bf 1} -  \alpha_t}\Big)^\top \textcolor{red}{x_0 x_0^\top} x_s. 
\label{eq:xsxt_em_x_0}
\end{talign}
This suggests that the reverse model 
 given $x_t = m$ can be chosen as
$p_\theta(x_s | x_t = m) 
\triangleq q(x_s | x_t = m, \textcolor{blue}{\mu_\theta(x_t,t)}, \textcolor{red}{\diag(\mu_\theta(x_t,t))})$
where $\mu_\theta(x_t,t)$ %
is a neural network
that approximates $\E[x_0 | x_t]$ while $\diag(\mu_\theta(x_t,t))$ approximates $\E[x_0 x_0^\top | x_t] = \diag(\E[x_0 | x_t])$. 
We show in \Cref{app:state-dependentfirst} that the negative continuous-time ELBO 
for the state-dependent rate case is
\begin{align} 
    \mathcal{L}_\infty 
    = \int_0^1 \Big(\frac{\alpha_t'}{\mathbf{1} - \alpha_{t}}\Big)^\top \E_{q(x_{t}|x_0)}\left[\delta_{x_t, m}\cdot (x_0 - \mu_\theta(x_t,t) + x_0x_0^\top \log \mu_\theta(x_{t},t))\right] \rmd t. 
    \label{eq:vec-diff-kl-limit}
\end{align}
Here, $\alpha_t'$ is the elementwise derivative of $\alpha_t$. 
This generalizes the MD4 loss \eqref{eq:diff-kl-limit}, which is recovered when $\alpha_t$ is a scalar schedule times a vector of ones.  
For $\numtokens$ tokens, the model further generalize similarly to  
\Cref{sec:model_objective} and the loss is given in \eqref{eq:genmd4-n}. 
We call this generalized model \textbf{GenMD4}.

\begin{figure}[t]
    \centering
    \includegraphics[trim={0 2cm 0 2cm},clip,width=0.96\textwidth]{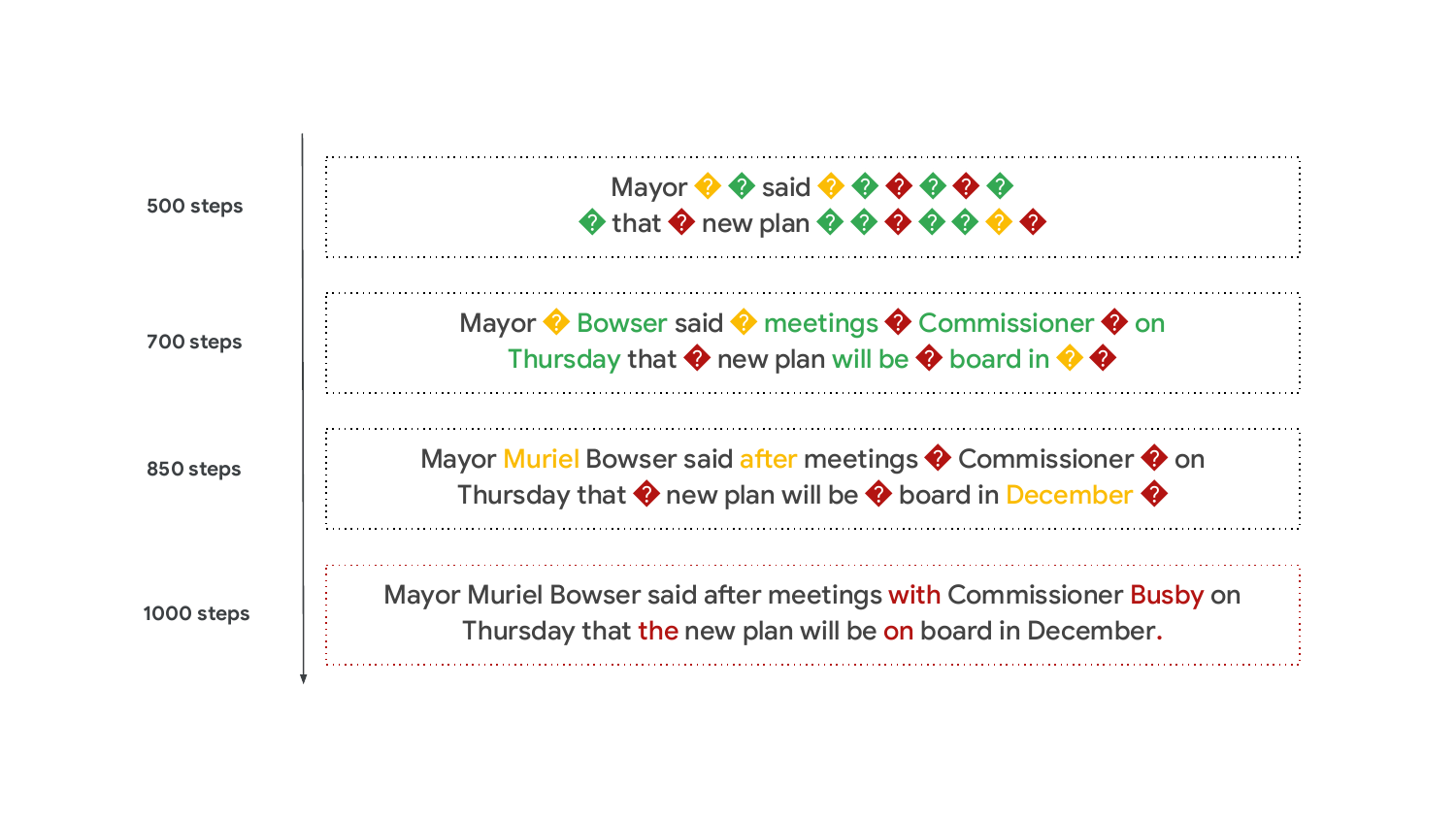}
    \caption{Iterative unmasking process for an unconditionally generated sample by MD4. This visualization %
    only includes a subsequence %
    from a generated sequence of 1024 tokens. %
    "?" %
    represents masks. %
    Masked tokens are revealed sequentially: green (steps 500-700), yellow (700-850), and red (850-1000).
    Additional unconditional generation from MD4 can be found in \Cref{app:owt-uncond}. 
    }
    \label{fig:sampling-progression}
\end{figure}

To learn the token dependent masking schedule using ELBO optimization, we parametrize the $m+1$ dimensional function  $\alpha_t$ using the polynomial schedule (see \Cref{fig:noise-schedules}) as
$
\alpha_{t, i} 
= 1 - t^{w_i}
$ 
and optimize each parameter $w_i > 0$.\footnote{We only need $m$ learnable parameters $w_i$, for $i=0,\ldots,m-1$, 
since $x_0$ can never be the mask token. For the final mask  dimension we can choose an arbitrary fixed value such as $w_m=0$.} The value of $w_i$, through the 
masking probability $1- \alpha_{t,i}$, %
determines how fast the token  with value $i$ jumps to the %
mask state. 
Since in the loss \eqref{eq:vec-diff-kl-limit} the distribution
$q(x_t|x_0)$ depends on $\alpha_t$ and thus the vector $w$, optimizing $w$ poses a discrete gradient estimation problem~\citep[see, e.g., ][]{shi2022gradient}. 
Naive autodiff leads to biased gradients and pushes $w$ towards zero because the gradients cannot propagate through the (discrete) samples drawn from $q(x_t|x_0)$. 
To fix this, we used the REINFORCE leave-one-out estimator~\citep{salimans2014using,Kool2019Buy4R} to compute low-variance unbiased gradients for optimizing $w$. 
Details are given in \Cref{app:grad-est}.

\section{Experiments}

\subsection{Text}

Text is natural discrete data with rich structures. 
For comparison with prior work, we evaluate likelihood on two datasets:
\textbf{text8}~\citep{text8}, a character-level text modeling benchmark, %
and \textbf{OpenWebText} \citep{Gokaslan2019OpenWeb}, an open clone of the unreleased WebText dataset used to train GPT-2~\citep{radford2019language}.
We also assess our model's performance on downstream tasks by training on
\textbf{FineWeb-Edu}~\citep{penedo2024fineweb}, a high-quality dataset of fine educational text commonly used by the open-source community for comparing LLMs.
Unless otherwise specified, a linear schedule and a cosine sampling grid are employed.

\begin{table}[t]
\footnotesize
\centering %
\caption{Zero-shot unconditional perplexity on five benchmark datasets from \citet{radford2019language}. The numbers for other methods are from \citet{lou2023discrete} except our reimplementation of SEDD Absorb. 
Our MD4 model 
achieves the best result on all benchmarks except LAMBADA where it is the second best. $^*$The GPT-2 numbers are reported for the GPT-2 checkpoint pretrained on WebText instead of OWT thus is not a direct comparison.}
\label{tab:model_performance}
\vskip 0.1in
\setlength{\tabcolsep}{5pt}
\resizebox{\textwidth}{!}{%
\begin{tabular}{llrrrrr}
Size & Method %
& LAMBADA & WikiText2 & PTB & WikiText103 & IBW \\
\midrule %
Small & GPT-2 (WebText)$^*$ %
& \bf 45.04 & 42.43 & 138.43 & 41.60 & 75.20 \\
    & D3PM %
    & $\le$ 93.47 & $\le$ 77.28 & $\le$ 200.82 & $\le$ 75.16 & $\le$ 138.92\\
    & Plaid %
    & $\le$ 57.28 & $\le$ 51.80 & $\le$ 142.60 & $\le$ 50.86 & $\le$ 91.12\\ 
      & SEDD Absorb %
      & $\le$ 50.92 & $\le$ 41.84 & $\le$ 114.24 & $\le$ 40.62 & $\le$ 79.29 \\
      & SEDD Absorb (reimpl.) %
      & $\le$ 49.73 & $\le$ 38.94 & $\le$ 107.54 & $\le$ 39.15 & $\le$ 72.96 \\
      & MD4 (Ours) %
      & $\le$ 48.43 & $\le$ \textbf{34.94} & $\le$ \bf 102.26 & $\le$ \textbf{35.90} & $\le$ \bf 68.10 \\ 
\midrule
Medium & GPT-2 (WebText)$^*$ %
       & \textbf{35.66} & 31.80 & 123.14 & 31.39 & 55.72 \\
       & SEDD Absorb %
       & $\le$ 42.77 & $\le$ 31.04 & $\le$ 87.12 & $\le$ 29.98 & $\le$ 61.19 \\
       & MD4 (Ours) %
       & $\le$ 44.12 & $\le$ \textbf{25.84} & $\le$ \textbf{66.07} & $\le$ \textbf{25.84} & $\le$ \textbf{51.45} \\
\bottomrule %
\label{tab:owt-zeroshot-ppl}
\end{tabular}}
\vskip -0.15in
\end{table}

\begin{wrapfigure}{r}{0.5\textwidth}
    \vskip -0.35in
    \centering
    \includegraphics[trim={0 0 2cm 0},clip,width=0.48\textwidth]{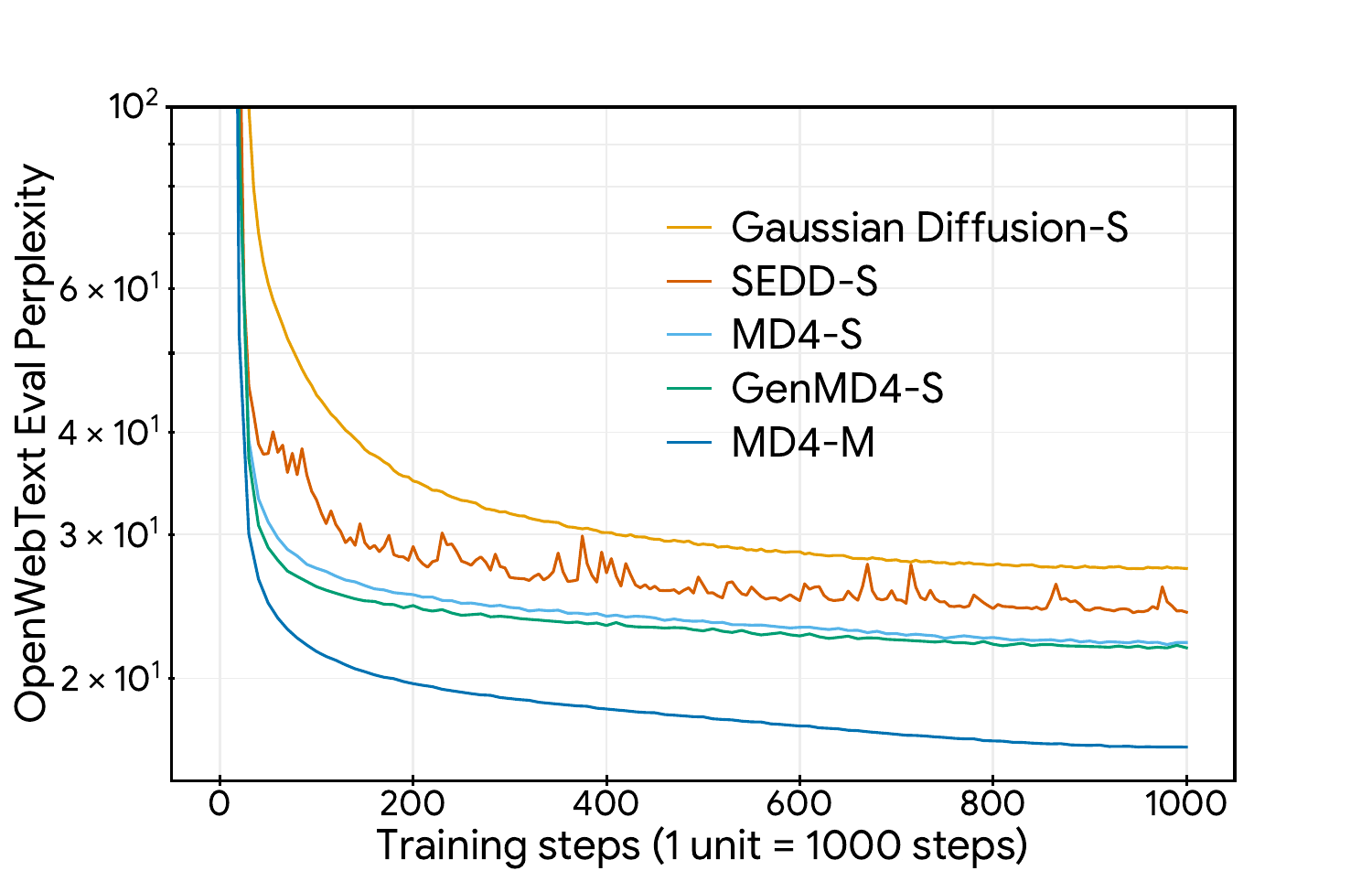}
    \caption{Perplexity on OpenWebText (OWT) validation set during training. The final numbers are reported in \Cref{tab:owt-eval-ppl} in Appendix.}
    \label{fig:owt-eval}
\end{wrapfigure}

\paragraph{OpenWebText.}
We train MD4 of GPT-2 small (S) and GPT-2 medium (M) sizes on OpenWebText %
and evaluate zero-shot perplexity %
on 
five benchmark datasets used in \citet{radford2019language}. 
We keep our evaluation setup the same as SEDD \citep{lou2023discrete}. 
To ensure fair comparison, we %
reimplemented SEDD in our codebase. 
Our implementation led to slightly better results than those  
reported in their paper.

As seen in 
\Cref{tab:owt-zeroshot-ppl}, our small model outperforms %
previous best discrete diffusion models %
on all five tasks. 
We are also better than GPT-2 on all tasks except LAMBADA where we %
are the second best method.
When scaling up to medium size, MD4 similarly beats SEDD and GPT-2 on 4 out of 5 tasks.

To confirm that the strong zero-shot performance stems from improved training, we plot perplexity on 2\% OpenWebText validation set in \Cref{fig:owt-eval}. 
Our models converge faster and have better final likelihoods than prior methods. 
We also observed that SEDD~\citep{lou2023discrete} has training instabilities, likely due to score parameterization breaking consistency between forward and reverse processes (\Cref{sec:related}).  
Although GenMD4 achieves lower perplexity than MD4, we observed that the learned $w$s can overfit to dataset statistics, making it less effective on zero-shot transfer tasks. 

We also assess our models' generation quality. 
\Cref{fig:sampling-progression} shows a randomly selected, notably coherent sample from MD4-medium and its denoising process. 
\Cref{fig:conditional-sample-generations} demonstrates MD4's text infilling ability and highlights a substantial quality gain when transitioning from uniform to cosine discretization (see \Cref{sec:sampling}).
Despite MD4's strong performance on quantitative metrics like generative perplexity, we have placed these results in Appendix \Cref{fig:generative-ppl} due to the metric's inherent unreliability, as noted in \Cref{sec:sampling}. 
We emphasize the more reliable FID-based assessments found in our image experiments.

\paragraph{Text8.} 
Following prior work \citep{austin2021structured,lou2023discrete}, we trained masked diffusion models on text8 %
and evaluate the bits-per-character on the test set
(details in \Cref{app:exp-text8}). As seen in \Cref{tab:text8}, our models
outperform previous discrete and continuous diffusion models, as well as state-of-the-art AO-ARMs which are closely related to discrete diffusion~\citep{hoogeboom2021autoregressive}.
Our model is only beaten by an autoregressive (AR) transformer and the AR-backbone Discrete Flow~\citep{tran2019discrete}. 
We believe this is because AR models only require learning a fixed generation order thus better utilize model capacity.
Text8's small vocabulary~(26 letters and a space) 
led us to expect limited flexibility from our state-dependent formulation. However, using the generalized objective in \eqref{eq:vec-diff-kl-limit}, GenMD4 achieved significantly better BPC than MD4, demonstrating the potential of state-dependent diffusion for discrete data.

\begin{wrapfigure}{r}{0.5\textwidth}
    \centering
    \includegraphics[clip,width=0.48\textwidth]{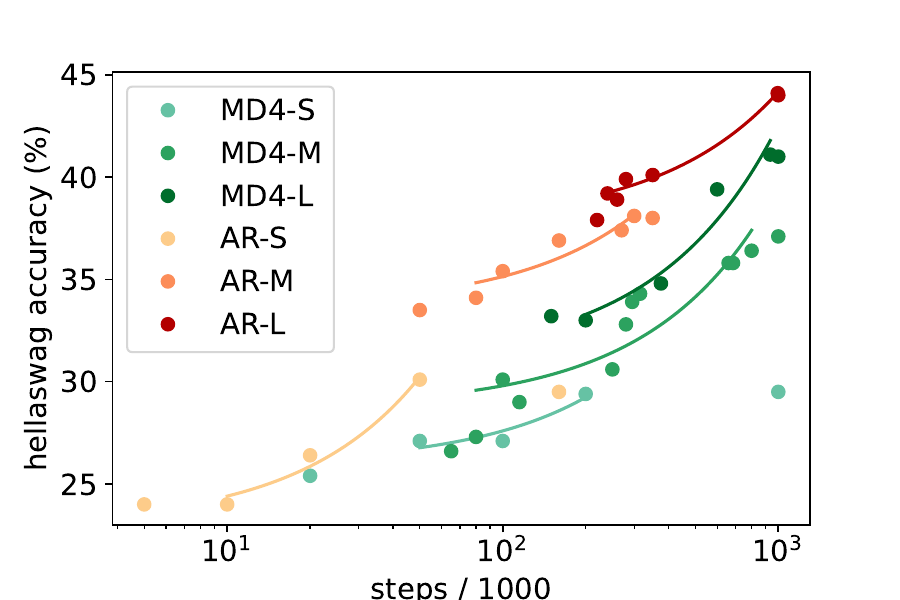}
    \caption{Hellaswag accuracy vs. training steps for MD4 and AR models at GPT-2 small, medium, and large scales.}
    \label{fig:hellaswag-eval}
\end{wrapfigure}

\paragraph{FineWeb-Edu.}  

We train MD4 on FineWeb-Edu and evaluate its zero-shot accuracy on the Hellaswag dataset \citep{zellers2019hellaswag}, a popular common sense inference benchmark for LLMs. We directly compared MD4 to its AR counterparts – transformers with identical configurations (except for causal masking) trained on the same data. Results are summarized in \Cref{fig:hellaswag-eval}.

MD4 demonstrates steady performance growth with increasing scale. While outperformed by AR models of the same size, the performance gap does not widen as model size increases. For example, AR-small reaches 30\% accuracy in 50k steps, while MD4-small takes 200k steps (4x data efficiency difference). At the medium scale, AR achieves 37\% in 270k steps, compared to MD4's 1 million steps.

\begin{table}[t] \vskip -0.1in
\footnotesize
\parbox{.42\textwidth}{
    \centering
    \caption{Bits Per Character (BPC) on Text8 test set. All models use standard 12-layer transformers similar to GPT-2 small~\citep{radford2019language} except Discrete Flow which uses $8\times 3$ layers. }
    \label{tab:text8}
    \vskip 0.1in
    \begin{tabular}{lr}
    Method &  BPC ($\downarrow$) \\ \midrule
    \textit{Continuous Diffusion} & \\
    Plaid \citep{gulrajani2024likelihood} (Our impl.) & $\le$ 1.48 \\
    BFN \citep{graves2023bayesian} & $\le$ 1.41\\
    \midrule
    \textit{Any-order Autoregressive} \\
    ARDM \citep{hoogeboom2021autoregressive} & $\le$ 1.43 \\
    MAC \citep{shih2022training} & $\le$ 1.40\\
    \midrule
    \textit{Autoregressive} & \\
    IAF/SCF~\citep{ziegler2019latent}  & 1.88\\
    AR Argmax Flow~\citep{hoogeboom2021argmax} & 1.39\\
    Discrete Flow~\citep{tran2019discrete} & \textbf{1.23}\\
    Transformer AR~\citep{austin2021structured} & \textbf{1.23}\\
    \midrule
    \textit{Discrete Diffusion} & \\
    Mult. Diffusion \citep{hoogeboom2021argmax} & $\le$ 1.72\\
    D3PM Uniform \citep{austin2021structured} & $\le$ 1.61\\
    D3PM Absorb \citep{austin2021structured} & $\le$ 1.45\\ 
    SEDD Absorb \citep{lou2023discrete} & $\le$ 1.39 \\ %
    MD4 (Ours) & $\le$ \textbf{1.37} \\ 
    GenMD4 (Ours) & $\le$ \textbf{1.34} \\
    \bottomrule
    \end{tabular} \vskip -0.1in
}
\hfill
\parbox{.5\textwidth}{
    \centering %
    \caption{Bits Per Dimension (BPD) on CIFAR-10 test set and Downsampled ImageNet 64$\times$64~\citep{van2016pixel} validation set. All models in the table are trained without data augmentation.}
    \label{tab:cifar}
    \vskip 0.05in
    \setlength{\tabcolsep}{5pt}
\resizebox{\linewidth}{!}{%
    \begin{tabular}{llrr}
     & Method & \#Params &  BPD ($\downarrow$) \\
    \midrule %
    \multirow{13}{*}{ \rotatebox{90}{CIFAR-10} } 
    & \textit{Autoregressive} \\
    & PixelRNN \citep{van2016pixel} & &  3.00 \\
    & Gated PixelCNN \citep{van2016conditional} &  & 3.03 \\
    & PixelCNN++ \citep{salimans2016pixelcnn} & 53M  & 2.92 \\
    & PixelSNAIL \citep{chen2018pixelsnail} & 46M & 2.85 \\
    & Image Transformer \citep{parmar2018image} &  & 2.90 \\
    & Sparse Transformer \citep{child2019generating} & 59M  & 2.80 \\
    \cmidrule{2-4}
    & \textit{Discrete Diffusion} \\
    & D3PM Absorb \citep{austin2021structured} & 37M & $\le$ 4.40 \\
    & D3PM Gauss \citep{austin2021structured} & 36M & $\le$ 3.44  \\ 
    & \citet{campbell2022continuous} & 36M & $\le$ 3.59 \\
    & \citet{campbell2022continuous} Absorb & 28M & $\le$ 3.52 \\
    & MD4 (Ours) & 28M & $\le$ \bf 2.75 \\
    \midrule %
    \multirow{8}{*}{ \rotatebox[origin=c]{90}{ImageNet 64$\times$64} } & \textit{Autoregressive} \\
    & PixelRNN \citep{van2016pixel} & &  3.63 \\
    & Gated PixelCNN \citep{van2016conditional} &  & 3.57 \\
    & Sparse Transformer \citep{child2019generating} &  152M & 3.44 \\
    & Routing Transformer \citep{roy2021efficient} & & 3.43 \\
    & Perceiver AR \citep{child2019generating} & 770M  & \bf 3.40 \\
    \cmidrule{2-4}
    & \textit{Discrete Diffusion} \\
    & MD4 (Ours) & 198M &  $\le$ \bf 3.40 \\
    \bottomrule %
    \end{tabular}}\vskip -0.1in
}

\end{table}

\subsection{Pixel-level image modeling} 

\begin{figure}[htb]
    \centering
    \includegraphics[width=0.9\textwidth]{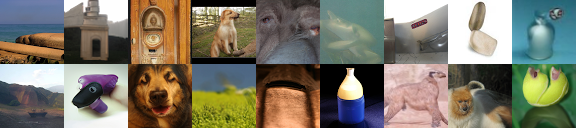}
    \caption{Non cherry-picked unconditional samples from MD4 trained on ImageNet 64x64, treating pixels as discrete tokens. More samples can be found in \Cref{fig:imagenet64-app} in Appendix. The model is optimized for likelihood instead of visual quality---see e.g., \citet{kingma2021variational} for samples from a continuous diffusion model optimized similarly for likelihood. }
    \label{fig:imagenet64}
\end{figure}

Unlike continuous diffusion which struggles with discrete data, we show that MD4, a discrete diffusion model, performs well on inherently continuous data, suggesting its potential for unifying modalities. 
We follow \citet{austin2021structured} and train MD4 on order-agnostic image data from CIFAR-10 and downsampled ImageNet 64$\times$64~\citep{van2016pixel}. 
Each image is treated as a set of 256-valued discrete tokens, making the model agnostic to pixel proximity.
We compare to other discrete diffusion and AR models with reported likelihood results on these datasets, although to our knowledge there are no published result on discrete diffusion for ImageNet $64\times 64$ that directly model raw pixel space. 

\Cref{tab:cifar} summarizes our results. 
We establish a new state-of-the-art for discrete diffusion models, %
outperforming previous work \citep{austin2021structured,campbell2022continuous} by a significant margin. 
Our CIFAR-10 result surpasses the best reported AR result. 
On ImageNet $64\times 64$, our results are competitive with Transformer AR models that are 4$\times$ larger, as well as a strong continuous diffusion model VDM \citep{kingma2021variational}.
Notably, despite lacking knowledge of the ordinal structure of pixel values, MD4 outperforms models trained with this inductive bias, including D3PM Gauss and \citet{campbell2022continuous} where the noising distribution is a discrete Gaussian that assigns larger probabilities to near pixel values. 
To isolate the differences caused by training objectives, we also implemented the \citet{campbell2022continuous} objective with the absorbing process, showing its high variance hinders learning even with our architecture.

We provide a random sample from our ImageNet 64$\times$64 model in \Cref{fig:imagenet64}. 
More results can be found in \Cref{app:results}.
In \Cref{fig:imagenet-fid}, we plot the FID values of samples generated under different choices of schedules and discretization grids. 
We can see that the model with the linear schedule plus a cosine grid achieves an FID close to the model with cosine schedule, both significantly outperform the linear schedule with a uniform grid. 
We further trained a class-conditional model on ImageNet 64$\times$64 that boosts the FID to around 7.
Although these are not state-of-the-art FIDs on ImageNet 64$\times$64, we emphasize our models are optimized for likelihood instead of sample quality.

\section{Conclusion}
\vspace{-2mm} 

In this work, we revisit masked diffusion models, focusing on a flexible continuous-time formulation. Existing works in this area are not easily accessible to non-specialists and present ELBOs that are difficult to optimize, often resulting in performance that is not competitive with continuous diffusions and AR models. The framework we propose provides a very simple expression of the ELBO as a weighted integral of cross-entropy losses. Additionally, we propose a generalized masked diffusion formulation (GenMD4), where the masking schedule depends on the current state of the process, and derive its corresponding ELBO. %
On text data, our MD4 models outperform existing discrete and continuous diffusion models. For pixel-level image modeling, we significantly improve discrete diffusion results, outperforming similar-sized AR models and achieving comparable likelihoods to continuous diffusion models such as VDM. 
GenMD4 provides further improvements in terms of likelihoods over the state-independent case.

Although we have improved masked diffusion models, they still suffer from limitations. First, in some tasks such as text8, masked diffusions are not yet competitive with AR models. We conjecture that this is because AR models can better leverage model capacity since they only require learning one order. It would be interesting to develop better architectures for discrete diffusions. Moreover, GenMD4 is promising, but it can easily overfit to the dataset, making it less effective for zero-shot transfer compared to simpler versions. Additionally, inference with a state-dependent schedule is more challenging.
{
\small

\bibliographystyle{unsrtnat}
\bibliography{ref}

\begin{thebibliography}{75}
\providecommand{\natexlab}[1]{#1}
\providecommand{\url}[1]{\texttt{#1}}
\expandafter\ifx\csname urlstyle\endcsname\relax
  \providecommand{\doi}[1]{doi: #1}\else
  \providecommand{\doi}{doi: \begingroup \urlstyle{rm}\Url}\fi

\bibitem[Sohl-Dickstein et~al.(2015)Sohl-Dickstein, Weiss, Maheswaranathan, and
  Ganguli]{sohl2015deep}
Jascha Sohl-Dickstein, Eric Weiss, Niru Maheswaranathan, and Surya Ganguli.
\newblock Deep unsupervised learning using nonequilibrium thermodynamics.
\newblock In \emph{International Conference on Machine Learning}, 2015.

\bibitem[Ho et~al.(2020)Ho, Jain, and Abbeel]{ho2020denoising}
Jonathan Ho, Ajay Jain, and Pieter Abbeel.
\newblock Denoising diffusion probabilistic models.
\newblock In \emph{Advances in Neural Information Processing Systems}, 2020.

\bibitem[Song et~al.(2020)Song, Sohl-Dickstein, Kingma, Kumar, Ermon, and
  Poole]{song2020score}
Yang Song, Jascha Sohl-Dickstein, Diederik~P Kingma, Abhishek Kumar, Stefano
  Ermon, and Ben Poole.
\newblock Score-based generative modeling through stochastic differential
  equations.
\newblock In \emph{International Conference on Learning Representations}, 2020.

\bibitem[Rombach et~al.(2022)Rombach, Blattmann, Lorenz, Esser, and
  Ommer]{rombach2022high}
Robin Rombach, Andreas Blattmann, Dominik Lorenz, Patrick Esser, and Bj{\"o}rn
  Ommer.
\newblock High-resolution image synthesis with latent diffusion models.
\newblock In \emph{Proceedings of the IEEE/CVF Conference on Computer Vision
  and Pattern Recognition}, pages 10684--10695, 2022.

\bibitem[Ramesh et~al.(2022)Ramesh, Dhariwal, Nichol, Chu, and
  Chen]{ramesh2022hierarchical}
Aditya Ramesh, Prafulla Dhariwal, Alex Nichol, Casey Chu, and Mark Chen.
\newblock Hierarchical text-conditional image generation with clip latents.
\newblock \emph{arXiv preprint arXiv:2204.06125}, 1\penalty0 (2):\penalty0 3,
  2022.

\bibitem[Saharia et~al.(2022)Saharia, Chan, Saxena, Li, Whang, Denton,
  Ghasemipour, Gontijo~Lopes, Karagol~Ayan, Salimans,
  et~al.]{saharia2022photorealistic}
Chitwan Saharia, William Chan, Saurabh Saxena, Lala Li, Jay Whang, Emily~L
  Denton, Kamyar Ghasemipour, Raphael Gontijo~Lopes, Burcu Karagol~Ayan, Tim
  Salimans, et~al.
\newblock Photorealistic text-to-image diffusion models with deep language
  understanding.
\newblock In \emph{Advances in Neural Information Processing Systems}, 2022.

\bibitem[Chen et~al.(2021)Chen, Zhang, Zen, Weiss, Norouzi, and
  Chan]{chen2021wavegrad}
Nanxin Chen, Yu~Zhang, Heiga Zen, Ron~J Weiss, Mohammad Norouzi, and William
  Chan.
\newblock Wavegrad: Estimating gradients for waveform generation.
\newblock In \emph{International Conference on Learning Representations}, 2021.

\bibitem[Kong et~al.(2021)Kong, Ping, Huang, Zhao, and
  Catanzaro]{kong2021diffwave}
Zhifeng Kong, Wei Ping, Jiaji Huang, Kexin Zhao, and Bryan Catanzaro.
\newblock Diffwave: A versatile diffusion model for audio synthesis.
\newblock In \emph{International Conference on Learning Representations}, 2021.

\bibitem[Ho et~al.(2022)Ho, Salimans, Gritsenko, Chan, Norouzi, and
  Fleet]{ho2022video}
Jonathan Ho, Tim Salimans, Alexey Gritsenko, William Chan, Mohammad Norouzi,
  and David~J Fleet.
\newblock Video diffusion models.
\newblock In \emph{Advances in Neural Information Processing Systems}, 2022.

\bibitem[Villegas et~al.(2023)Villegas, Babaeizadeh, Kindermans, Moraldo,
  Zhang, Saffar, Castro, Kunze, and Erhan]{villegas2023phenaki}
Ruben Villegas, Mohammad Babaeizadeh, Pieter-Jan Kindermans, Hernan Moraldo,
  Han Zhang, Mohammad~Taghi Saffar, Santiago Castro, Julius Kunze, and Dumitru
  Erhan.
\newblock Phenaki: Variable length video generation from open domain textual
  descriptions.
\newblock In \emph{International Conference on Learning Representations}, 2023.

\bibitem[Bar-Tal et~al.(2024)Bar-Tal, Chefer, Tov, Herrmann, Paiss, Zada,
  Ephrat, Hur, Li, Michaeli, et~al.]{bar2024lumiere}
Omer Bar-Tal, Hila Chefer, Omer Tov, Charles Herrmann, Roni Paiss, Shiran Zada,
  Ariel Ephrat, Junhwa Hur, Yuanzhen Li, Tomer Michaeli, et~al.
\newblock Lumiere: A space-time diffusion model for video generation.
\newblock \emph{arXiv preprint arXiv:2401.12945}, 2024.

\bibitem[OpenAI(2024)]{openai2024sora}
OpenAI.
\newblock Sora.
\newblock \url{https://openai.com/index/sora/}, 2024.

\bibitem[Bao et~al.(2024)Bao, Xiang, Yue, He, Zhu, Zheng, Zhao, Liu, Wang, and
  Zhu]{bao2024vidu}
Fan Bao, Chendong Xiang, Gang Yue, Guande He, Hongzhou Zhu, Kaiwen Zheng, Min
  Zhao, Shilong Liu, Yaole Wang, and Jun Zhu.
\newblock Vidu: a highly consistent, dynamic and skilled text-to-video
  generator with diffusion models.
\newblock \emph{arXiv preprint arXiv:2405.04233}, 2024.

\bibitem[Austin et~al.(2021)Austin, Johnson, Ho, Tarlow, and Van
  Den~Berg]{austin2021structured}
Jacob Austin, Daniel~D Johnson, Jonathan Ho, Daniel Tarlow, and Rianne Van
  Den~Berg.
\newblock Structured denoising diffusion models in discrete state-spaces.
\newblock In \emph{Advances in Neural Information Processing Systems}, 2021.

\bibitem[Hoogeboom et~al.(2021{\natexlab{a}})Hoogeboom, Nielsen, Jaini,
  Forr{\'e}, and Welling]{hoogeboom2021argmax}
Emiel Hoogeboom, Didrik Nielsen, Priyank Jaini, Patrick Forr{\'e}, and Max
  Welling.
\newblock Argmax flows and multinomial diffusion: Learning categorical
  distributions.
\newblock In \emph{Advances in Neural Information Processing Systems},
  2021{\natexlab{a}}.

\bibitem[Vignac et~al.(2023)Vignac, Krawczuk, Siraudin, Wang, Cevher, and
  Frossard]{vignac2023digress}
Cl{\'e}ment Vignac, Igor Krawczuk, Antoine Siraudin, Bohan Wang, Volkan Cevher,
  and Pascal Frossard.
\newblock {DiGress}: Discrete denoising diffusion for graph generation.
\newblock In \emph{International Conference on Learning Representations}, 2023.

\bibitem[Yang et~al.(2023)Yang, Yu, Wang, Wang, Weng, Zou, and
  Yu]{yang2023diffsound}
Dongchao Yang, Jianwei Yu, Helin Wang, Wen Wang, Chao Weng, Yuexian Zou, and
  Dong Yu.
\newblock Diffsound: Discrete diffusion model for text-to-sound generation.
\newblock \emph{IEEE/ACM Transactions on Audio, Speech, and Language
  Processing}, 2023.

\bibitem[Gruver et~al.(2023)Gruver, Stanton, Frey, Rudner, Hotzel,
  Lafrance-Vanasse, Rajpal, Cho, and Wilson]{gruver2024protein}
Nate Gruver, Samuel Stanton, Nathan Frey, Tim~GJ Rudner, Isidro Hotzel, Julien
  Lafrance-Vanasse, Arvind Rajpal, Kyunghyun Cho, and Andrew~G Wilson.
\newblock Protein design with guided discrete diffusion.
\newblock In \emph{Advances in Neural Information Processing Systems}, 2023.

\bibitem[Dieleman et~al.(2022)Dieleman, Sartran, Roshannai, Savinov, Ganin,
  Richemond, Doucet, Strudel, Dyer, Durkan, et~al.]{dieleman2022continuous}
Sander Dieleman, Laurent Sartran, Arman Roshannai, Nikolay Savinov, Yaroslav
  Ganin, Pierre~H Richemond, Arnaud Doucet, Robin Strudel, Chris Dyer, Conor
  Durkan, et~al.
\newblock Continuous diffusion for categorical data.
\newblock \emph{arXiv preprint arXiv:2211.15089}, 2022.

\bibitem[Chen et~al.(2022)Chen, ZHANG, and Hinton]{chen2022analog}
Ting Chen, Ruixiang ZHANG, and Geoffrey Hinton.
\newblock Analog bits: Generating discrete data using diffusion models with
  self-conditioning.
\newblock In \emph{International Conference on Learning Representations}, 2022.

\bibitem[Li et~al.(2022)Li, Thickstun, Gulrajani, Liang, and
  Hashimoto]{li2022diffusion}
Xiang Li, John Thickstun, Ishaan Gulrajani, Percy~S Liang, and Tatsunori~B
  Hashimoto.
\newblock Diffusion-{LM} improves controllable text generation.
\newblock In \emph{Advances in Neural Information Processing Systems}, 2022.

\bibitem[Gulrajani and Hashimoto(2023)]{gulrajani2024likelihood}
Ishaan Gulrajani and Tatsunori~B Hashimoto.
\newblock Likelihood-based diffusion language models.
\newblock In \emph{Advances in Neural Information Processing Systems}, 2023.

\bibitem[Lovelace et~al.(2024)Lovelace, Kishore, Wan, Shekhtman, and
  Weinberger]{lovelace2024latent}
Justin Lovelace, Varsha Kishore, Chao Wan, Eliot Shekhtman, and Kilian~Q
  Weinberger.
\newblock Latent diffusion for language generation.
\newblock In \emph{Advances in Neural Information Processing Systems}, 2024.

\bibitem[Richemond et~al.(2022)Richemond, Dieleman, and
  Doucet]{richemond2022categorical}
Pierre~H Richemond, Sander Dieleman, and Arnaud Doucet.
\newblock Categorical {SDE}s with simplex diffusion.
\newblock \emph{arXiv preprint arXiv:2210.14784}, 2022.

\bibitem[Avdeyev et~al.(2023)Avdeyev, Shi, Tan, Dudnyk, and
  Zhou]{avdeyev2023dirichlet}
Pavel Avdeyev, Chenlai Shi, Yuhao Tan, Kseniia Dudnyk, and Jian Zhou.
\newblock Dirichlet diffusion score model for biological sequence generation.
\newblock In \emph{International Conference on Machine Learning}, 2023.

\bibitem[Graves et~al.(2023)Graves, Srivastava, Atkinson, and
  Gomez]{graves2023bayesian}
Alex Graves, Rupesh~Kumar Srivastava, Timothy Atkinson, and Faustino Gomez.
\newblock Bayesian flow networks.
\newblock \emph{arXiv preprint arXiv:2308.07037}, 2023.

\bibitem[Xue et~al.(2024)Xue, Zhou, Nie, Min, Zhang, Zhou, and
  Li]{xue2024unifying}
Kaiwen Xue, Yuhao Zhou, Shen Nie, Xu~Min, Xiaolu Zhang, Jun Zhou, and Chongxuan
  Li.
\newblock Unifying {B}ayesian flow networks and diffusion models through
  stochastic differential equations.
\newblock \emph{arXiv preprint arXiv:2404.15766}, 2024.

\bibitem[Liu et~al.(2024)Liu, Chen, Theodorou, and Tao]{liu2024mirror}
Guan-Horng Liu, Tianrong Chen, Evangelos Theodorou, and Molei Tao.
\newblock Mirror diffusion models for constrained and watermarked generation.
\newblock In \emph{Advances in Neural Information Processing Systems}, 2024.

\bibitem[Campbell et~al.(2022)Campbell, Benton, De~Bortoli, Rainforth,
  Deligiannidis, and Doucet]{campbell2022continuous}
Andrew Campbell, Joe Benton, Valentin De~Bortoli, Thomas Rainforth, George
  Deligiannidis, and Arnaud Doucet.
\newblock A continuous time framework for discrete denoising models.
\newblock In \emph{Advances in Neural Information Processing Systems}, 2022.

\bibitem[Sun et~al.(2022)Sun, Yu, Dai, Schuurmans, and Dai]{sunscore}
Haoran Sun, Lijun Yu, Bo~Dai, Dale Schuurmans, and Hanjun Dai.
\newblock Score-based continuous-time discrete diffusion models.
\newblock In \emph{International Conference on Learning Representations}, 2022.

\bibitem[Zheng et~al.(2023)Zheng, Yuan, Yu, and Kong]{zheng2023reparameterized}
Lin Zheng, Jianbo Yuan, Lei Yu, and Lingpeng Kong.
\newblock A reparameterized discrete diffusion model for text generation.
\newblock \emph{arXiv preprint arXiv:2302.05737}, 2023.

\bibitem[Lou et~al.(2024)Lou, Meng, and Ermon]{lou2023discrete}
Aaron Lou, Chenlin Meng, and Stefano Ermon.
\newblock Discrete diffusion language modeling by estimating the ratios of the
  data distribution.
\newblock In \emph{International Conference on Machine Learning}, 2024.

\bibitem[Kingma et~al.(2021)Kingma, Salimans, Poole, and
  Ho]{kingma2021variational}
Diederik Kingma, Tim Salimans, Ben Poole, and Jonathan Ho.
\newblock Variational diffusion models.
\newblock In \emph{Advances in Neural Information Processing Systems}, 2021.

\bibitem[Karras et~al.(2022)Karras, Aittala, Aila, and
  Laine]{karras2022elucidating}
Tero Karras, Miika Aittala, Timo Aila, and Samuli Laine.
\newblock Elucidating the design space of diffusion-based generative models.
\newblock In \emph{Advances in Neural Information Processing Systems}, 2022.

\bibitem[Benton et~al.(2022)Benton, Shi, De~Bortoli, Deligiannidis, and
  Doucet]{benton2024denoising}
Joe Benton, Yuyang Shi, Valentin De~Bortoli, George Deligiannidis, and Arnaud
  Doucet.
\newblock From denoising diffusions to denoising {M}arkov models.
\newblock \emph{arXiv preprint arXiv:2211.03595}, 2022.

\bibitem[Ou et~al.(2024)Ou, Nie, Xue, Zhu, Sun, Li, and Li]{ou2024}
Jingyang Ou, Shen Nie, Kaiwen Xue, Fengqi Zhu, Jiacheng Sun, Zhenguo Li, and
  Chongxuan Li.
\newblock Your absorbing discrete diffusion secretly models the conditional
  distributions of clean data.
\newblock \emph{arXiv preprint arXiv:2406.03736}, 2024.

\bibitem[Sahoo et~al.(2024)Sahoo, Arriola, Schiff, Gokaslan, Marroquin, Chiu,
  Rush, and Kuleshov]{sahoo2024simple}
Subham~Sekhar Sahoo, Marianne Arriola, Yair Schiff, Aaron Gokaslan, Edgar
  Marroquin, Justin~T Chiu, Alexander Rush, and Volodymyr Kuleshov.
\newblock Simple and effective masked diffusion language models.
\newblock \emph{arXiv preprint arXiv:2406.07524}, 2024.

\bibitem[Zhao et~al.(2024{\natexlab{a}})Zhao, Ding, Yu, and
  Akoglu]{zhao2024improving}
Lingxiao Zhao, Xueying Ding, Lijun Yu, and Leman Akoglu.
\newblock Improving and unifying discrete and continuous-time discrete
  denoising diffusion.
\newblock \emph{arXiv preprint arXiv:2402.03701}, 2024{\natexlab{a}}.

\bibitem[Chang et~al.(2022)Chang, Zhang, Jiang, Liu, and
  Freeman]{chang2022maskgit}
Huiwen Chang, Han Zhang, Lu~Jiang, Ce~Liu, and William~T Freeman.
\newblock Maskgit: Masked generative image transformer.
\newblock In \emph{Proceedings of the IEEE/CVF Conference on Computer Vision
  and Pattern Recognition}, 2022.

\bibitem[Heusel et~al.(2017)Heusel, Ramsauer, Unterthiner, Nessler, and
  Hochreiter]{heusel2017gans}
Martin Heusel, Hubert Ramsauer, Thomas Unterthiner, Bernhard Nessler, and Sepp
  Hochreiter.
\newblock {GAN}s trained by a two time-scale update rule converge to a local
  {N}ash equilibrium.
\newblock \emph{Advances in Neural Information Processing Systems}, 30, 2017.

\bibitem[Holtzman et~al.(2019)Holtzman, Buys, Du, Forbes, and
  Choi]{holtzman2019curious}
Ari Holtzman, Jan Buys, Li~Du, Maxwell Forbes, and Yejin Choi.
\newblock The curious case of neural text degeneration.
\newblock In \emph{International Conference on Learning Representations}, 2019.

\bibitem[Ho and Salimans(2022)]{ho2022classifier}
Jonathan Ho and Tim Salimans.
\newblock Classifier-free diffusion guidance.
\newblock \emph{arXiv preprint arXiv:2207.12598}, 2022.

\bibitem[Nisonoff et~al.(2024)Nisonoff, Xiong, Allenspach, and
  Listgarten]{nisonoff2024unlocking}
Hunter Nisonoff, Junhao Xiong, Stephan Allenspach, and Jennifer Listgarten.
\newblock Unlocking guidance for discrete state-space diffusion and flow
  models.
\newblock \emph{arXiv preprint arXiv:2406.01572}, 2024.

\bibitem[Zhao et~al.(2024{\natexlab{b}})Zhao, Shi, Mackey, and
  Linderman]{zhao2024informed}
Yixiu Zhao, Jiaxin Shi, Lester Mackey, and Scott Linderman.
\newblock Informed correctors for discrete diffusion models.
\newblock \emph{arXiv preprint arXiv:2407.21243}, 2024{\natexlab{b}}.

\bibitem[Bradbury et~al.(2018)Bradbury, Frostig, Hawkins, Johnson, Leary,
  Maclaurin, Necula, Paszke, Vander{P}las, Wanderman-{M}ilne, and
  Zhang]{jax2018github}
James Bradbury, Roy Frostig, Peter Hawkins, Matthew~James Johnson, Chris Leary,
  Dougal Maclaurin, George Necula, Adam Paszke, Jake Vander{P}las, Skye
  Wanderman-{M}ilne, and Qiao Zhang.
\newblock {JAX}: composable transformations of {P}ython+{N}um{P}y programs,
  2018.
\newblock URL \url{http://github.com/jax-ml/jax}.

\bibitem[Kitouni et~al.(2023)Kitouni, Nolte, Hensman, and
  Mitra]{kitouni2023disk}
Ouail Kitouni, Niklas Nolte, James Hensman, and Bhaskar Mitra.
\newblock Disk: A diffusion model for structured knowledge.
\newblock \emph{arXiv preprint arXiv:2312.05253}, 2023.

\bibitem[Uria et~al.(2014)Uria, Murray, and Larochelle]{uria2014deep}
Benigno Uria, Iain Murray, and Hugo Larochelle.
\newblock A deep and tractable density estimator.
\newblock In \emph{International Conference on Machine Learning}, pages
  467--475. PMLR, 2014.

\bibitem[Hoogeboom et~al.(2021{\natexlab{b}})Hoogeboom, Gritsenko, Bastings,
  Poole, van~den Berg, and Salimans]{hoogeboom2021autoregressive}
Emiel Hoogeboom, Alexey~A Gritsenko, Jasmijn Bastings, Ben Poole, Rianne
  van~den Berg, and Tim Salimans.
\newblock Autoregressive diffusion models.
\newblock In \emph{International Conference on Learning Representations},
  2021{\natexlab{b}}.

\bibitem[Campbell et~al.(2024)Campbell, Yim, Barzilay, Rainforth, and
  Jaakkola]{campbell2024generative}
Andrew Campbell, Jason Yim, Regina Barzilay, Tom Rainforth, and Tommi Jaakkola.
\newblock Generative flows on discrete state-spaces: Enabling multimodal flows
  with applications to protein co-design.
\newblock In \emph{International Conference on Machine Learning}, 2024.

\bibitem[Santos et~al.(2023)Santos, Fox, Lubbers, and Lin]{santos2023blackout}
Javier~E Santos, Zachary~R Fox, Nicholas Lubbers, and Yen~Ting Lin.
\newblock Blackout diffusion: generative diffusion models in discrete-state
  spaces.
\newblock In \emph{International Conference on Machine Learning}, pages
  9034--9059. PMLR, 2023.

\bibitem[Savinov et~al.(2022)Savinov, Chung, Binkowski, Elsen, and
  Oord]{savinov2021step}
Nikolay Savinov, Junyoung Chung, Mikolaj Binkowski, Erich Elsen, and Aaron
  van~den Oord.
\newblock Step-unrolled denoising autoencoders for text generation.
\newblock In \emph{International Conference on Learning Representations}, 2022.

\bibitem[Shi et~al.(2022)Shi, Zhou, Hwang, Titsias, and
  Mackey]{shi2022gradient}
Jiaxin Shi, Yuhao Zhou, Jessica Hwang, Michalis Titsias, and Lester Mackey.
\newblock Gradient estimation with discrete {S}tein operators.
\newblock In \emph{Advances in Neural Information Processing Systems}, 2022.

\bibitem[Salimans and Knowles(2014)]{salimans2014using}
Tim Salimans and David~A Knowles.
\newblock On using control variates with stochastic approximation for
  variational bayes and its connection to stochastic linear regression.
\newblock \emph{arXiv preprint arXiv:1401.1022}, 2014.

\bibitem[Kool et~al.(2019)Kool, Hoof, and Welling]{Kool2019Buy4R}
W.~Kool, H.~V. Hoof, and M.~Welling.
\newblock Buy 4 {REINFORCE} samples, get a baseline for free!
\newblock In \emph{DeepRLStructPred@ICLR}, 2019.

\bibitem[Mahoney()]{text8}
Matt Mahoney.
\newblock Text8.
\newblock \url{https://mattmahoney.net/dc/textdata.html}.
\newblock Accessed: 2024-05-14.

\bibitem[Gokaslan and Cohen(2019)]{Gokaslan2019OpenWeb}
Aaron Gokaslan and Vanya Cohen.
\newblock Openwebtext corpus.
\newblock \url{http://Skylion007.github.io/OpenWebTextCorpus}, 2019.

\bibitem[Radford et~al.(2019)Radford, Wu, Child, Luan, Amodei, and
  Sutskever]{radford2019language}
Alec Radford, Jeffrey Wu, Rewon Child, David Luan, Dario Amodei, and Ilya
  Sutskever.
\newblock Language models are unsupervised multitask learners.
\newblock \emph{OpenAI blog}, 1\penalty0 (8):\penalty0 9, 2019.

\bibitem[Penedo et~al.(2024)Penedo, Kydl{\'\i}{\v{c}}ek, Lozhkov, Mitchell,
  Raffel, Von~Werra, Wolf, et~al.]{penedo2024fineweb}
Guilherme Penedo, Hynek Kydl{\'\i}{\v{c}}ek, Anton Lozhkov, Margaret Mitchell,
  Colin Raffel, Leandro Von~Werra, Thomas Wolf, et~al.
\newblock The fineweb datasets: Decanting the web for the finest text data at
  scale.
\newblock \emph{arXiv preprint arXiv:2406.17557}, 2024.

\bibitem[Tran et~al.(2019)Tran, Vafa, Agrawal, Dinh, and
  Poole]{tran2019discrete}
Dustin Tran, Keyon Vafa, Kumar Agrawal, Laurent Dinh, and Ben Poole.
\newblock Discrete flows: Invertible generative models of discrete data.
\newblock In \emph{Advances in Neural Information Processing Systems}, 2019.

\bibitem[Zellers et~al.(2019)Zellers, Holtzman, Bisk, Farhadi, and
  Choi]{zellers2019hellaswag}
Rowan Zellers, Ari Holtzman, Yonatan Bisk, Ali Farhadi, and Yejin Choi.
\newblock Hellaswag: Can a machine really finish your sentence?
\newblock \emph{arXiv preprint arXiv:1905.07830}, 2019.

\bibitem[Shih et~al.(2022)Shih, Sadigh, and Ermon]{shih2022training}
Andy Shih, Dorsa Sadigh, and Stefano Ermon.
\newblock Training and inference on any-order autoregressive models the right
  way.
\newblock In \emph{Advances in Neural Information Processing Systems}, 2022.

\bibitem[Ziegler and Rush(2019)]{ziegler2019latent}
Zachary Ziegler and Alexander Rush.
\newblock Latent normalizing flows for discrete sequences.
\newblock In \emph{International Conference on Machine Learning}, 2019.

\bibitem[Van Den~Oord et~al.(2016)Van Den~Oord, Kalchbrenner, and
  Kavukcuoglu]{van2016pixel}
A{\"a}ron Van Den~Oord, Nal Kalchbrenner, and Koray Kavukcuoglu.
\newblock Pixel recurrent neural networks.
\newblock In \emph{International Conference on Machine Learning}, 2016.

\bibitem[Van~den Oord et~al.(2016)Van~den Oord, Kalchbrenner, Espeholt,
  Vinyals, and Graves]{van2016conditional}
Aaron Van~den Oord, Nal Kalchbrenner, Lasse Espeholt, Oriol Vinyals, and Alex
  Graves.
\newblock Conditional image generation with pixelcnn decoders.
\newblock In \emph{Advances in Neural Information Processing systems}, 2016.

\bibitem[Salimans et~al.(2016)Salimans, Karpathy, Chen, and
  Kingma]{salimans2016pixelcnn}
Tim Salimans, Andrej Karpathy, Xi~Chen, and Diederik~P Kingma.
\newblock Pixelcnn++: Improving the pixelcnn with discretized logistic mixture
  likelihood and other modifications.
\newblock In \emph{International Conference on Learning Representations}, 2016.

\bibitem[Chen et~al.(2018)Chen, Mishra, Rohaninejad, and
  Abbeel]{chen2018pixelsnail}
Xi~Chen, Nikhil Mishra, Mostafa Rohaninejad, and Pieter Abbeel.
\newblock Pixelsnail: An improved autoregressive generative model.
\newblock In \emph{International Conference on Machine Learning}, 2018.

\bibitem[Parmar et~al.(2018)Parmar, Vaswani, Uszkoreit, Kaiser, Shazeer, Ku,
  and Tran]{parmar2018image}
Niki Parmar, Ashish Vaswani, Jakob Uszkoreit, Lukasz Kaiser, Noam Shazeer,
  Alexander Ku, and Dustin Tran.
\newblock Image transformer.
\newblock In \emph{International Conference on Machine Learning}, 2018.

\bibitem[Child et~al.(2019)Child, Gray, Radford, and
  Sutskever]{child2019generating}
Rewon Child, Scott Gray, Alec Radford, and Ilya Sutskever.
\newblock Generating long sequences with sparse transformers.
\newblock \emph{arXiv preprint arXiv:1904.10509}, 2019.

\bibitem[Roy et~al.(2021)Roy, Saffar, Vaswani, and Grangier]{roy2021efficient}
Aurko Roy, Mohammad Saffar, Ashish Vaswani, and David Grangier.
\newblock Efficient content-based sparse attention with routing transformers.
\newblock \emph{Transactions of the Association for Computational Linguistics},
  9:\penalty0 53--68, 2021.

\bibitem[Van Den~Oord et~al.(2017)Van Den~Oord, Vinyals, et~al.]{van2017neural}
Aaron Van Den~Oord, Oriol Vinyals, et~al.
\newblock Neural discrete representation learning.
\newblock \emph{Advances in Neural Information Processing Systems}, 30, 2017.

\bibitem[Han et~al.(2024)Han, Kenealy, Barua, Fiedel, and Constant]{Han2024-oe}
Kehang Han, Kathleen Kenealy, Aditya Barua, Noah Fiedel, and Noah Constant.
\newblock Transfer learning for text diffusion models.
\newblock \emph{arXiv preprint arXiv:2401.17181}, 2024.

\bibitem[Bengio et~al.(2015)Bengio, Vinyals, Jaitly, and
  Shazeer]{bengio2015scheduled}
Samy Bengio, Oriol Vinyals, Navdeep Jaitly, and Noam Shazeer.
\newblock Scheduled sampling for sequence prediction with recurrent neural
  networks.
\newblock In \emph{Advances in Neural Information Processing Systems}, 2015.

\bibitem[Glynn(1990)]{Glynn:1990}
Peter~W. Glynn.
\newblock Likelihood ratio gradient estimation for stochastic systems.
\newblock \emph{Communications of the ACM}, 33\penalty0 (10):\penalty0 75--84,
  1990.

\bibitem[Williams(1992)]{williams1992simple}
Ronald~J Williams.
\newblock Simple statistical gradient-following algorithms for connectionist
  reinforcement learning.
\newblock \emph{Machine Learning}, 8\penalty0 (3-4):\penalty0 229--256, 1992.

\bibitem[Peebles and Xie(2023)]{peebles2023scalable}
William Peebles and Saining Xie.
\newblock Scalable diffusion models with transformers.
\newblock In \emph{Proceedings of the IEEE/CVF International Conference on
  Computer Vision}, pages 4195--4205, 2023.

\end{thebibliography}

}

\appendix
\newpage

\begin{table}[ht]
    \footnotesize
    \centering
    \caption{Masking schedule formulas.}
    \label{tab:noise-schedules}
    \renewcommand{\arraystretch}{1.5}
    \begin{tabular}{lcc}
    Masking schedules & $\alpha_t$ & Cross-entropy loss weight $\frac{\alpha'_t}{1 - \alpha_t}$ \\
    \midrule
      Linear   & $1 - t$ & $-\frac{1}{t}$ \\
      Polynomial & $1 - t^w$ & $-\frac{w}{t}$ \\
      Geometric & $\exp\left(-\bar{\beta}_{\text{min}}^{1-t}\bar{\beta}_{\text{max}}^t\right)$ & $-\frac{\exp\left(-\bar{\beta}_{\text{min}}^{1-t}\bar{\beta}_{\text{max}}^t\right)}{1 - \exp\left(-\bar{\beta}_{\text{min}}^{1-t}\bar{\beta}_{\text{max}}^t\right)}\bar{\beta}_{\text{min}}^{1-t}\bar{\beta}_{\text{max}}^t\log \frac{\sigma_{\text{min}}}{\sigma_{\text{max}}}$ \\
      Cosine & $1 - \cos(\frac{\pi}{2}(1 - t))$ & $-\frac{\pi}{2}\tan(\frac{\pi}{2}(1 - t))$ \\
      \bottomrule
    \end{tabular}
\end{table}

\section{Discrete-time derivation}
\label{app:discrete-time-derivation}

We divide time from 0 to 1 into $T$ intervals, and let $s(i) = (i - 1)/T$, $t(i) = i / T$. 
The forward transition matrix $Q_i\in \mathbb{R}^{(m+1)\times (m+1)}$ ($m$ is vocabulary size) at time $t(i)$ is
\begin{align*}
    [Q_i]_{jk} = \begin{cases}
    1 & j = k = m \\
    1 - \beta_i & j = k \neq m \\
    \beta_i & k = m, j \neq m \\
    0 & \text{otherwise}
    \end{cases}
\end{align*}
or more compactly written as
\begin{align*}
    Q_i = (1 - \beta_i)I + \beta_i \mathbf{1} e_m^\top,
\end{align*}
where $\mathbf{1}$ denotes an all-one vector of size $m + 1$, and $e_m$ is an one-hot vector of size $m+1$ with the $m$-th element (recall that counting starts from $0$) being one. 
We use an one-hot vector $x_t$ of length $m+1$ to denote the discrete state. 
The forward conditionals are defined as 
\begin{align} \label{eq:app-per-step-transition}
    q(x_{t(i)}|x_{s(i)}) = \mathrm{Cat}(x_{t(i)}; Q_{i}^\top x_{s(i)}) = x_{s(i)}^\top Q_{i} x_{t(i)},
\end{align}
where $Q_{i}^\top x_{s(i)}$ is the probabilities for each of the $m + 1$ categories that $x_{t(i)}$ can take. 
The marginal forward distribution at time $t(i)$ given $x_0$ is 
\begin{align*}
    q(x_{t(i)}|x_0) = \mathrm{Cat}(x_{t(i)}; \bar{Q}_i^\top x_0) = x_0^\top \bar{Q}_i x_{t(i)},
\end{align*}
where $\bar{Q}_i = \prod_{j=1}^i Q_j = \prod_{j=1}^i (1 - \beta_j)I + \big(1 - \prod_{j=1}^i (1 - \beta_j)\big)\mathbf{1}e_m^\top$. 
To see what this leads to in continuous time, we let $\beta_i = \frac{\beta(t(i))}{T}$ and $T \to \infty$:
\begin{align*}
    \prod_{j=1}^i (1 - \beta_j) &= \exp\Big(\sum_{j=1}^i \log (1 - \beta_j)\Big) \notag \\
    &= \exp\Big(\sum_{j=1}^i - \frac{\beta(t(j))}{T} + o(1/T)\Big) \notag \\
    &\overset{T\to \infty}{\to} \exp\Big(-\int_0^{t(i)} \beta(s) \rmd s\Big). 
\end{align*}
We let $\bar{Q}(t)$ denote the limit of $\bar{Q}_i$ in this case:
\begin{align}
    \bar{Q}(t) &= \exp\big(-\int_0^{t} \beta(s) \rmd s\big) I + \Big(1 - \exp\big(-\int_0^{t} \beta(s) \rmd s\big)\Big) \mathbf{1}e_m^\top \notag \\
    &\triangleq \alpha_t I + (1 - \alpha_t)\mathbf{1}e_m^\top. \notag
\end{align}
Here we define $\alpha_t \triangleq \exp(-\int_0^{t} \beta(s) \rmd s)$. 
And the marginal forward transition is
\begin{align} \label{eq:app_q_xt_given_x0}
    q(x_t|x_0) = \mathrm{Cat}(x_t; \bar{Q}(t)^\top x_0) = x_0^\top \bar{Q}(t) x_t = \alpha_t x_0^\top x_t + (1 - \alpha_t)e_m^\top x_t. 
\end{align}

\section{Continuous-time derivation}\label{app:conttimeformul}

We consider a continuous-time Markov chain with transition rates
\begin{align} \label{eq:Qt}
    Q(t) = (Q_i - I) / (1/T) = \beta(t)(\mathbf{1}e_m^\top - I).
\end{align}
For simplicity, we let $Q = \mathbf{1}e_m^\top - I$. 
The marginal forward distribution at time $t$ given $x_0$ is $q(x_{t}|x_0) = \mathrm{Cat}(x_{t}; \bar{Q}(t)^\top x_0)$, where
\begin{equation*}
    \bar{Q}(t) = \exp\Big( \int_0^t Q(s) \rmd s \Big) = \exp\Big( Q\int_0^t \beta(s) \rmd s\Big) = \exp(\bar{\beta}(t)Q).
\end{equation*}
Here we define $\bar{\beta}(t) \triangleq \int_0^t \beta(s) \rmd s$. 
The matrix exponential can be computed via eigendecomposition:
\begin{align*}
    \bar{\beta}(t)Q = U\Lambda U^{-1},
\end{align*}
where
\begin{align*}
    U &= I - e_m e_m^\top + \frac{1}{\sqrt{n + 1}}\mathbf{1}e_m^\top, \\
    U^{-1} &= I + \sqrt{n + 1}e_m e_m^\top - \mathbf{1}e_m^\top, \\
    \Lambda &= \bar{\beta}(t)(e_m e_m^\top - I),
\end{align*}
and thus $\exp(\Lambda) = \alpha_t I + (1 - \alpha_t)e_m e_m^\top $,
\begin{align*}
    \bar{Q}(t) = U\exp(\Lambda)U^{-1} = \alpha_t I + (1 - \alpha_t) \mathbf{1}e_m^\top. 
\end{align*}
A simpler derivation uses the following property: 
\begin{align*}
    Q^2 = -Q. 
\end{align*}
Therefore, 
\begin{align*}
    \bar{Q}(t) &= \exp(\bar{\beta}(t)Q) \notag \\
    &= I + \bar{\beta}(t)Q + \frac{1}{2}\bar{\beta}(t)^2 Q^2 + \frac{1}{3}\bar{\beta}(t)^3 Q^3 + \dots \notag \\
    &= I + Q - (1 - \bar{\beta}(t) + \frac{1}{2}\bar{\beta}(t)^2 - \frac{1}{3}\bar{\beta}(t)^3 + \dots)Q \notag \\
    &= I + Q - \exp(-\bar{\beta}(t))Q \notag \\
    &= \alpha_t I + (1 - \alpha_t) \mathbf{1}e_m^\top. 
\end{align*}
This marginal forward transition matrix at time $t$ coincides with the result \eqref{eq:bar-Qt-from-discrete} we get by taking the limit of discrete-time derivation.

\paragraph{Arbitrary discretization of the continuous-time forward process. }

For the discrete-time process we have defined the per-step transition in \eqref{eq:app-per-step-transition}. 
For the continuous-time process, we can derive the transition matrix $\bar{Q}(s, t)_{ij} \triangleq q(x_t=j|x_s=i)$ between two arbitrary time $s$ and $t$ as the solution to the following differential equation (known as Kolmogorov forward equation)
\begin{align*}
    \frac{\rmd
    }{\rmd t} \bar{Q}(s, t) &= \bar{Q}(s, t) Q(t) \text{ where } Q(t) = \beta(t) Q
\end{align*}
with initial condition $ \bar{Q}(s, s) = I$. 
The solution is given by
\begin{align*}
    \bar{Q}(s, t) = \exp\big((\bar{\beta}(t) - \bar{\beta}(s))Q\big) = \bar{Q}(s)^{-1}\bar{Q}(t). 
\end{align*}
Routine work (using the Woodbury matrix inversion lemma) shows that
\begin{align*}
    \bar{Q}(t)^{-1} = \alpha_t^{-1} I + (1 - \alpha_t^{-1})\mathbf{1}e_m^\top. 
\end{align*}
Plugging the result back, we get the forward transition distribution from $s$ to $t$:
\begin{align} \label{eq:app_q_xt_given_xs}
    q(x_t|x_s) &= \mathrm{Cat}(x_t; \bar{Q}(s, t)^\top x_s) = x_s^\top \bar{Q}(s, t) x_t, \\
    \text{ where } \bar{Q}(s, t) &\triangleq \bar{Q}(s)^{-1}\bar{Q}(t) = \frac{\alpha_t}{\alpha_s} I + \big(1 - \frac{\alpha_t}{\alpha_s}\big)\mathbf{1}e_m^\top.  \nonumber
\end{align}

\section{Time reversal of the forward process given $x_0$}
\label{app:time-reversal-x0}

The analytic property of our forward process allows to compute many quantities of interest in closed form. 
One such quantity frequently used in diffusion models is the time reversal of the forward process given $x_0$: $q(x_s|x_t, x_0)$. 
We can compute it using \eqref{eq:app_q_xt_given_x0} and \eqref{eq:app_q_xt_given_xs} as
\begin{align} \label{eq:app-q-reverse-simple}
    q(x_s|x_t, x_0) &= \frac{q(x_t|x_s)q(x_s|x_0)}{q(x_t|x_0)} \notag \\
    &= \begin{cases}
        \frac{\alpha_s - \alpha_t}{1 - \alpha_t}x_s^\top x_0 & x_s \neq m, x_t = m  \\
        \frac{1 - \alpha_s}{1 - \alpha_t} & x_s = m, x_t = m \\
        x_s^\top x_t & x_t \neq m. 
    \end{cases} 
\end{align}

Visually, eqn \eqref{eq:app-q-reverse-simple} is a $\mathbb{R}^{(m+1)\times (m+1)}$ matrix (\Cref{fig:reverse-transition-matrix}, left) whose first index is $x_t$ and the second is $x_s$. The matrix is almost an identity matrix except the last row corresponding to $x_t$ is the mask token. The last row means with probability of $\frac{\alpha_s-\alpha_t}{1-\alpha_t}$ the mask token gets unmasked to become $x_0$, and with probability of $\frac{1-\alpha_s}{1-\alpha_t}$ it remains masked. 

Alternatively, we can rewrite the above using reverse transition matrix $\bar{R}^{x_0}(t, s)\in \mathbb{R}^{(m+1)\times (m+1)}$ as
\begin{align*} 
\label{eq:app-q-reverse-compact}
    q(x_s|x_t, x_0) = \mathrm{Cat}(x_s; \bar{R}^{x_0}(t, s)^\top x_t), \text{ where } \bar{R}^{x_0}(t, s) = I + \frac{\alpha_s - \alpha_t}{1 - \alpha_t}e_m(x_0 - e_m)^\top. 
\end{align*}

\begin{figure}[t]
    \centering
    \includegraphics[trim={0 3cm 0 3cm},clip,width=0.92\textwidth]{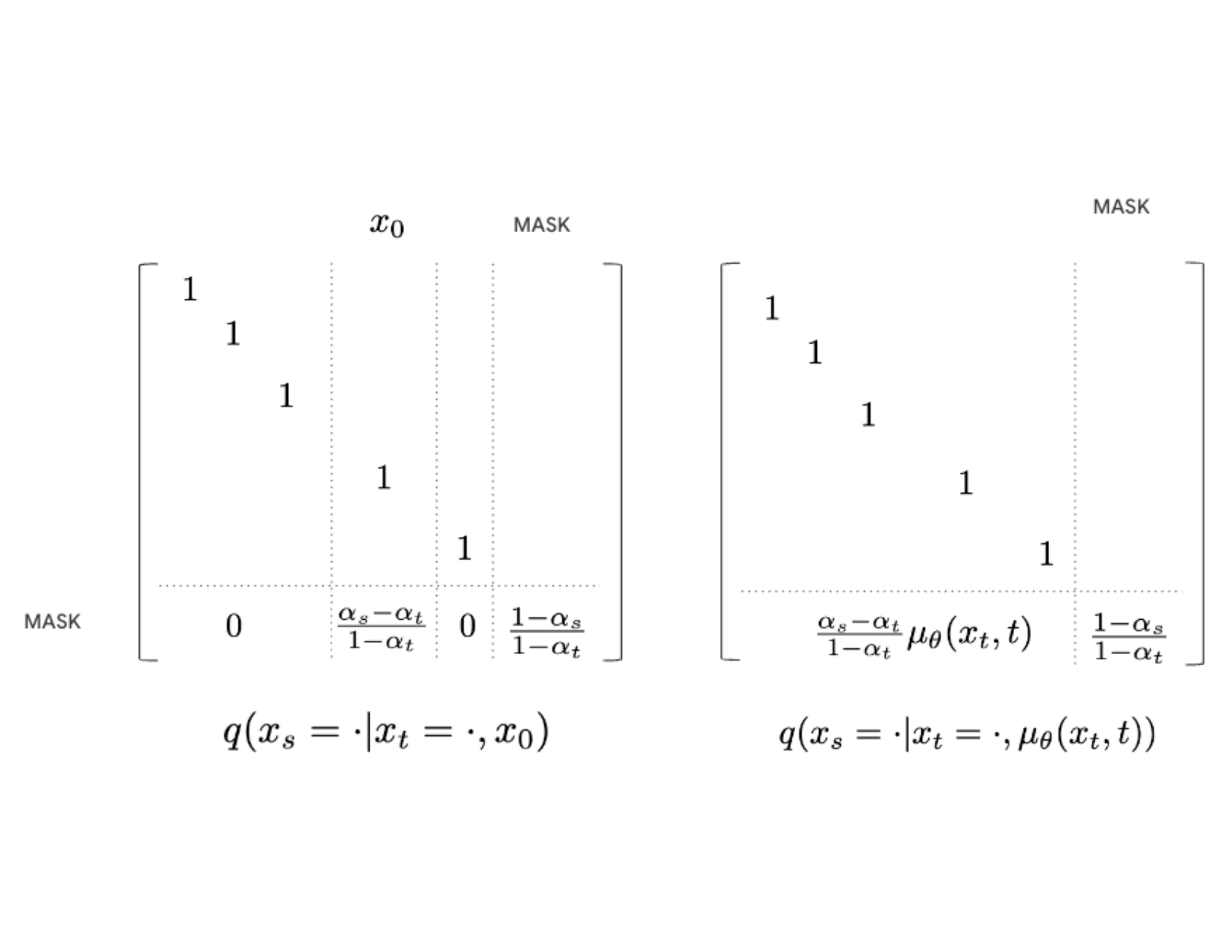}
    \caption{The reverse transition probability and our generative model. Left: $q(x_s = \cdot | x_t = \cdot, x_0)$ in matrix form where first index is $x_t$ and second index is $x_s$. Right: $p_\theta(x_s=\cdot | x_t=\cdot) \triangleq q(x_s=\cdot|x_t=\cdot,\mu_{\theta}(x_t, t))$ also in matrix form.}
    \label{fig:reverse-transition-matrix}
\end{figure}

We are also interested in what would happen in the infinitesimal time limit, i.e., when $s = t - \Delta t$ and $\Delta t \to 0$. Note that %
\begin{align*}
    \alpha_{t - \Delta t} - \alpha_t = -\alpha_t' \Delta t + o(\Delta t). 
\end{align*}
Plugging it into the original formula, we get
\begin{align*}
    \bar{R}^{x_0}(t, t - \Delta t) = I - \frac{\alpha_t'}{1-\alpha_t} e_m(x_0 - e_m)^\top \Delta t + o(\Delta t). 
\end{align*}
Comparing the above with the transition rate matrix $R^{x_0}(t)$ definition
\begin{align*} %
    \bar{R}^{x_0}(t, t - \Delta t) = I + R^{x_0}(t) \Delta t + o(\Delta t),
\end{align*}
we have determined the  transition rate matrix for the reverse process conditioned on $x_0$:
\begin{align} 
    R^{x_0}(t) & = -\frac{\alpha_t'}{1 - \alpha_t} e_m (x_0 - e_m)^\top. 
\end{align}

\section{Details of the ELBO}
\label{app:elbo}
Using \eqref{eq:app-q-reverse-simple} and \eqref{eq:hat-x0}, we compute the KL divergences between forward and reverse transitions 
\begin{align}
    \KL{q(x_s|x_t, x_0)}{p_\theta(x_s|x_t)} 
    &= \KL{q(x_s|x_t, x_0)}{q(x_s|x_t, \mu_\theta(x_t,t))} \label{eq:diff-kl-t} \\
    &= \begin{cases}
    \sum_{x_s=0}^{m} q(x_s|x_t, x_0)\log \frac{q(x_s|x_t, x_0)}{q(x_s|x_t, \mu_\theta(x_t,t))} & x_t = m\\
    0 & x_t \neq m 
    \end{cases} \notag 
    \\
    &= \delta_{x_t=m}\sum_{k\neq m} \frac{\alpha_s - \alpha_t}{1 - \alpha_t} %
    x_0^\top e_k \log \frac{%
    x_0^\top e_k}{%
    \mu_\theta(x_t,t)^\top e_k} \notag \\
    &= -\delta_{x_t=m}\frac{\alpha_s - \alpha_t}{1 - \alpha_t}x_0^\top \log \mu_\theta(x_t,t). \label{eq:app-diff-kl-t2} \notag
\end{align}
Note that $0\log 0 = 0$. 
Alternatively, this result can be easily obtained from the visual depictions of $q(x_s|x_t, x_0)$ and $p_\theta(x_s|x_t)$ shown in \Cref{fig:reverse-transition-matrix}.
In this case, the reconstruction term becomes
\begin{align}
    \E_{q(x_{t(1)}|x_0)}[\log p(x_0|x_{t(1)})] &= \sum_{k=0}^{m} q_{t(1)|0}(k|x_0) \log \frac{q_{t(1)|0}(k|x_0)}{\sum_{j\neq m} q_{t(1)|0}(k|j)} \notag  \\
    &= \alpha_{t(1)} \cdot \log \frac{\alpha_{t(1)}}{\alpha_{t(1)}} + (1 - \alpha_{t(1)})\log \frac{1}{m} \notag \\
    &= - (1 - \alpha_{t(1)}) \log m. 
    \notag
\end{align}

The prior KL term can be computed as
\begin{equation*}
    \KL{q(x_1|x_0)}{p(x_1)} = \KL{\delta_{x_1,m}}{\delta_{x_1,m}} = 0.
\end{equation*}

As usual, we take the continuous-time limit by letting $T \to \infty$:
\begin{align} 
    \mathcal{L}_\infty &\triangleq \lim_{T\to \infty} \mathcal{L}_T \notag \\
    &= \lim_{T\to \infty} \sum_{i=2}^T -\frac{\alpha_{s(i)} - \alpha_{t(i)}}{s(i) - t(i)}\frac{s(i) - t(i)}{1 - \alpha_{t(i)}} x_0^\top \E_{q(x_{t(i)}|x_0)}\left[\delta_{x_{t(i)},m} \log \mu_\theta(x_{t(i)}, t(i))\right] \notag \\
    &= \int_{t(1)}^1 \frac{\alpha_t'}{1 - \alpha_{t}} x_0^\top \E_{q(x_{t}|x_0)}\left[\delta_{x_t,m}\log \mu_\theta(x_{t},t)\right] \rmd t.
    \notag
    \label{eq:app-diff-kl-limit}
\end{align}

\section{Avoiding undefined KL divergence}
\label{app:undefined-kl}

When defining the forward process, we often do not want $\alpha_1$ to be exactly 0, or equivalently, $\lambda_1$ to be $\infty$ for numerical stability reasons. 
Instead, we set $\lambda_1$ to be a finite value, and thereby $\alpha_1$ has a small positive value. 
This has a problem that the support of $q(x_1|x_0)$ is no longer $\{m\}$ and instead becomes $\{m, x_0\}$. 
As a result, the KL divergence between $q(x_1|x_0)$ and $p(x_1)$ is undefined because $q(x_1|x_0)$ is not absolutely continuous with respect to $p(x_1) = \delta_{x_1,m}$.
To resolve the issue, we modify the prior distribution $p(x_1)$ such that it has support over all $m+1$ values. 
One such choice is letting 
\begin{equation*}
    p(x_1) = \frac{\alpha_1}{m} \sum_{j\neq m} \delta_{x_1,j} + (1 - \alpha_1) \delta_{x_1,m}.
\end{equation*}
Then, the prior KL divergence term becomes
\begin{align}
    \KL{q(x_1|x_0)}{p(x_1)} &= \sum_{x_1 = 0}^{m} q(x_1|x_0) \log \frac{q(x_1|x_0)}{p(x_1)} \notag \\
    &= \sum_{x_1=0}^{m} (\alpha_1 \delta_{x_1,x_0} + (1 - \alpha_1) \delta_{x_1,m}) \log \frac{\alpha_1 \delta_{x_1,x_0} + (1 - \alpha_1) \delta_{x_1 = m}}{p(x_1)} \notag \\
    &= \alpha_1 \log \frac{\alpha_1}{\alpha_1 / m} + (1 - \alpha_1) \log \frac{1 - \alpha_1}{1 - \alpha_1} \notag \\
    &= \alpha_1 \log m. \notag
\end{align}

\section{Details of Training and Sampling with MD4}

\subsection{Training}

\begin{algorithm}[h] %
	\caption{A single step of training with MD4.
 }
	\label{alg:training}
	\begin{algorithmic}
 \State {\bf Input:} data minibatch $\{x_t^i\}_{i=1}^B$, network $\mu_{\theta}(\cdot, t)$, masking schedule $\alpha_t$ 
 \For{$i=1,\dots, B$} (in parallel):
 \State $t_i \gets \mathrm{mod}(u + i/B, 1)$, $u \sim U[0, 1]$
 \State for $n \in [N]$, mask out each token $x_0^{i,(n)}$ independently with probability $1-\alpha_{t_i}$ to obtain %
 $x_{t_i}^i$
 \State 
 $\text{for } n \in [N]$, if $x_{t_i}^{(n)} \! = \! m$, compute weighted cross entropy loss $\frac{\alpha_{t_i}'}{1-\alpha_{t_i}} (x_0^{i,(n)})^\top \log \mu^{(n)}_{\theta}(x_{t_i}^i, t_i)$
 \EndFor
 \State Sum over all weighted cross entropy losses for mask positions and optimize via autodiff
	\end{algorithmic}
\end{algorithm}

\subsection{Sampling}

\begin{algorithm}[h] %
	\caption{Unconditional and conditional generation (e.g., infilling) with MD4. 
 }
	\label{alg:sampling}
	\begin{algorithmic}
		\State {\bf Input:} Context sequence $x^c$ of length $N$, with masks indicating the target areas for generation 
            \State {\bf Init:} $\{t(i)\}_{i=0}^T \gets \text{discretize}([0, 1])$, $x_{t(T)} \gets x^c$
		\For{$i=T, T - 1, \dots, 1$}
            \State $t \gets t(i)$, $s \gets t(i-1)$
		\State $\text{for } n \in [N]$, if $x_t^{(n)} = m$, 
  draw $x_s^{(n)} \sim \mathrm{Cat}(\frac{\alpha_s - \alpha_t}{1 - \alpha_t} \mu_\theta^{(n)}(x_t, t) + \frac{1- \alpha_s}{1 - \alpha_t} e_m)$ else $x_s^{(n)} \gets x_t^{(n)}$
		\EndFor
		\State {\bf return} $x_0$.
	\end{algorithmic}
\end{algorithm}

\section{JAX Categorical Sampling and Implicit Top-$p$}
\label{app:jax-implicit-topp}

We noticed that the following equivalent implementation of \Cref{alg:sampling} leads to significantly worse sample quality in JAX:

\begin{algorithm}[h] %
	\caption{Variant of \Cref{alg:sampling} that yields lower sample quality when implemented in JAX.
 }
	\label{alg:two-step}
\begin{algorithmic}
	\State {\bf Input:} Token sequence $x^c$ of length $N$, with masks indicating the target areas for generation
        \State {\bf Init:} $\{t(i)\}_{i=0}^T \gets \text{discretize}([0, 1])$, $x_{t(T)} \gets x^c$
	\For{$i=T, T - 1, \dots, 1$}
        \State $t \gets t(i)$, $s \gets t(i-1)$
	\For{$n \in [N]$} (in parallel)
	\State draw $u \sim U[0, 1]$
	\If{$x_t^{(n)} = m$ \textbf{and} $u < \frac{\alpha_s - \alpha_t}{1 - \alpha_t}$} 
\State draw $x_s^{(n)} \sim \mathrm{Cat}(\mu_\theta^{(n)}(x_t, t))$
    \Else
        \State $x_s^{(n)} \gets x_t^{(n)}$
    \EndIf
	\EndFor
	\EndFor
	\State {\bf return} $x_0$.
\end{algorithmic}
\end{algorithm}

However, mathetically it is equivalent to \Cref{alg:sampling} and should produce identical results. 
Our investigation revealed that the issue arises because  \Cref{alg:sampling} scales the output probabilities of $\mu_\theta$ by a small factor $\frac{\alpha_s - \alpha_t}{1 - \alpha_t}$ as $s$ is close to $t$, causing some categories to have very low probabilities. 
JAX, however, implements categorical sampling using Gumbel argmax, which is less numerically stable than methods like binary search. 
As a result, categories with low probabilities are rarely sampled, even when their cumulative probability is significant. 
In our experiment, we found that categories with probabilities below 1e-8 are rarely sampled out of a total of 50K categories.
Thus, \Cref{alg:sampling} implicitly performs top-$p$ sampling~(with a dynamic p) under JAX’s categorical sampling, yielding better sample quality than \Cref{alg:two-step} where $\mu_\theta$ is not scaled by a small factor and has fewer categories truncated.

\section{Unifying Existing Masked Diffusion Models
\label{app:unified_view}
}

\subsection{%
The CTMC point of view
}
\label{app:ctmc}

We first prove a lemma that connects the forward and reverse transition rate matrices. This follows from the results in \citep{campbell2022continuous} but we give a proof for completeness.

\begin{lemma} \label{lem:forward-reverse-rate}
The forward transition rate matrix $Q(t)$ and the reverse transition rate matrix (given $x_0$) $R^{x_0}(t)$ satisfy:
\begin{align} \label{eq:app_rate-forward-reverse}
    R^{x_0}(t)_{kj} = 
    Q(t)_{jk}\frac{q_{t|0}(j|x_0)}{q_{t|0}(k|x_0)} \text{ for } j \neq k. %
\end{align}
\end{lemma}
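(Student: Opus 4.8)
The plan is to derive \eqref{eq:app_rate-forward-reverse} from Bayes' rule together with the short-time Taylor expansions of the forward and reverse transition matrices, which is the CTMC time-reversal identity specialized to our process. Recall from Section~\ref{sec:related} that, to first order, the forward transition over a small interval ending at $t$ satisfies $[\bar{Q}(t-\Delta t, t)]_{jk} = \delta_{jk} + Q(t)_{jk}\Delta t + o(\Delta t)$, and that by definition \eqref{eq:rev-rate-x0} the conditional reverse step satisfies $[\bar{R}^{x_0}(t, t-\Delta t)]_{kj} = \delta_{kj} + R^{x_0}(t)_{kj}\Delta t + o(\Delta t)$. Here I am using the convention, consistent with the appendix, that $[\bar{Q}(s,t)]_{jk} = q(x_t=k\mid x_s=j)$ and $[\bar{R}^{x_0}(t,s)]_{kj} = q(x_s=j\mid x_t=k,x_0)$.

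First I would apply Bayes' rule to the infinitesimal reverse step: for $j \neq k$ and any $t$ with $q_{t|0}(k|x_0)>0$,
\[
[\bar{R}^{x_0}(t,t-\Delta t)]_{kj} = \frac{q(x_t=k\mid x_{t-\Delta t}=j)\, q_{t-\Delta t|0}(j|x_0)}{q_{t|0}(k|x_0)}.
\]
Substituting the forward expansion $q(x_t=k\mid x_{t-\Delta t}=j) = Q(t)_{jk}\Delta t + o(\Delta t)$ (using $\delta_{jk}=0$) together with $q_{t-\Delta t|0}(j|x_0) = q_{t|0}(j|x_0) + O(\Delta t)$ gives
\[
[\bar{R}^{x_0}(t,t-\Delta t)]_{kj} = Q(t)_{jk}\,\frac{q_{t|0}(j|x_0)}{q_{t|0}(k|x_0)}\,\Delta t + o(\Delta t),
\]
since the $O(\Delta t)$ correction to $q_{t-\Delta t|0}$ only contributes at order $\Delta t^2$ after multiplication by $\Delta t$. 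Matching this against the reverse expansion $[\bar{R}^{x_0}(t,t-\Delta t)]_{kj} = R^{x_0}(t)_{kj}\Delta t + o(\Delta t)$ and equating coefficients of $\Delta t$ yields \eqref{eq:app_rate-forward-reverse}.

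The one point that needs care—and the main obstacle—is the restriction $q_{t|0}(k|x_0)>0$: the ratio in \eqref{eq:app_rate-forward-reverse} is only meaningful on the support of the marginal, which for the masking process consists of only the two states $k=m$ and $k=x_0$. I would therefore also confirm the identity directly on this support using the explicit marginals $q_{t|0}(m|x_0)=1-\alpha_t$ and $q_{t|0}(x_0|x_0)=\alpha_t$ from \eqref{eq:app_q_xt_given_x0}. The only nonvanishing instance is the transition out of the mask ($k=m$, $j\neq m$): there $Q(t)_{jm}=\beta(t)$, so the right-hand side of \eqref{eq:app_rate-forward-reverse} equals $\beta(t)\,\alpha_t (x_0)_j/(1-\alpha_t)$, while $R^{x_0}(t)_{mj} = -\alpha_t'(x_0)_j/(1-\alpha_t)$ from \eqref{eq:rev-rate-x0}, and these coincide because $\alpha_t' = -\beta(t)\alpha_t$. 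For $k=x_0$ both sides vanish, since the mask state is absorbing so $Q(t)_{jk}=0$ whenever $k\neq m$. This explicit computation closes the case $q_{t|0}(k|x_0)>0$ and simultaneously pins down the sign convention in \eqref{eq:rev-rate-x0}, completing the proof.
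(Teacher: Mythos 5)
Your proof is correct and takes essentially the same route as the paper's: Bayes' rule applied to an infinitesimal backward transition, first-order Taylor expansion of the forward kernel $q(x_t = k \mid x_{t-\Delta t} = j) = \delta_{jk} + Q(t)_{jk}\Delta t + o(\Delta t)$, and matching the coefficient of $\Delta t$ against the definition of $R^{x_0}(t)$. Your added verification on the support of $q_{t|0}(\cdot|x_0)$ (checking the $k=m$ and $k=x_0$ cases explicitly via $\alpha_t' = -\beta(t)\alpha_t$) is extra care the paper omits, but it does not change the underlying argument.
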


\begin{proof}
Consider the reverse transition from time $t+\tau$ to $t$. 
For $j \neq k$, Bayes' rule yields
\begin{align*}
    q(x_t = j|x_{t+\tau}=k, x_0) &= \frac{q(x_t = j|x_0)q(x_{t+\tau} = k|x_t = j)}{q(x_{t+\tau} =k|x_0)} \notag \\
    &= \frac{q(x_t = j|x_0)(\delta_{jk} + Q(t)_{jk}\tau + o(\tau))}{q(x_{t+\tau} = k|x_0)} \notag \\
    &\overset{\tau\to 0}{=} \delta_{kj} + \frac{q(x_t = j|x_0)}{q(x_{t} = k|x_0)}Q(t)_{jk}\tau + o(\tau). 
\end{align*}
Then, it follows from the definition of the transition rate matrix that $R^{x_0}(t)_{kj} = Q(t)_{jk}\frac{q_{t|0}(j|x_0)}{q_{t|0}(k|x_0)}$. 
\end{proof}

\begin{proposition} \label{thm:ctmc}
We use the shorthand $R_\theta(t)_{kj}$ to denote the approximate reverse transition rate from the state $k$ to $j$ obtained by substituting our prediction model $\mu_{\theta}(k)$ for $x_0$ in $R^{x_0}(t)_{kj}$. 
Then, the continuous-time objective \eqref{eq:diff-kl-limit} can be equivalently expressed as
\begin{align} \label{eq:app_kl-limit-rate}
\mathcal{L}_\infty 
= -\int_{t(1)}^1 \E_{q_{t|0}(k|x_0)}\Big[R_\theta(t)_{kk} +\sum_{j\neq k}Q(t)_{kj} \log R_\theta(t)_{jk}\Big] \rmd t + \text{C},
\end{align}
where $C$ is a constant independent of $\theta$. 
\end{proposition}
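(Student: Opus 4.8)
The plan is to establish the equality by a direct computation that exploits the extreme sparsity of the masked-diffusion rate matrices, with Lemma~\ref{lem:forward-reverse-rate} supplying the conceptual ``integration-by-parts.'' First I would record the explicit entries of the two rate matrices. From $Q(t) = \beta(t)(\mathbf{1}e_m^\top - I)$ the only nonzero off-diagonal entries are $Q(t)_{km} = \beta(t)$ for $k\neq m$, so in the forward process every non-mask token can only jump to the mask state. Dually, from \eqref{eq:rev-rate-x0} the reverse rate $R^{x_0}(t)$ is supported on the mask row, with $R^{x_0}(t)_{mj} = -\frac{\alpha_t'}{1-\alpha_t}(x_0)_j$ for $j\neq m$; substituting the prediction $\mu_\theta(m,t)$ for $x_0$ gives $R_\theta(t)_{mj} = -\frac{\alpha_t'}{1-\alpha_t}\mu_\theta(m,t)_j$, and since $\mu_\theta(m,t)$ is a probability vector with zero mask-component, the diagonal is $R_\theta(t)_{mm} = \frac{\alpha_t'}{1-\alpha_t}$ (independent of $\theta$) while $R_\theta(t)_{kk} = 0$ for $k\neq m$.

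Next I would evaluate the two pieces of the integrand in \eqref{eq:app_kl-limit-rate} under the expectation $\E_{q_{t|0}(k|x_0)}$, recalling that $q_{t|0}(\cdot|x_0)$ places mass $\alpha_t$ on the clean state $k=x_0$ and $1-\alpha_t$ on $k=m$, and vanishes elsewhere. For the diagonal term only $k=m$ contributes, yielding $(1-\alpha_t)R_\theta(t)_{mm} = \alpha_t'$, a $\theta$-independent quantity. For the cross term $\sum_{j\neq k}Q(t)_{kj}\log R_\theta(t)_{jk}$, the structure of $Q(t)$ forces $j=m$ and $k\neq m$, and among non-mask states only $k=x_0$ carries probability, so this reduces to $\alpha_t\beta(t)\log R_\theta(t)_{m,x_0}$. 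I would then split the logarithm as $\log\big(-\tfrac{\alpha_t'}{1-\alpha_t}\big) + \log\mu_\theta(m,t)_{x_0}$, sending the first summand into the $\theta$-independent constant.

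The final step uses $\alpha_t' = -\beta(t)\alpha_t$ to rewrite $\alpha_t\beta(t) = -\alpha_t'$, so the $\theta$-dependent part of the integrand becomes $-\alpha_t'\log\mu_\theta(m,t)_{x_0}$; after the outer minus sign and collecting everything $\theta$-independent into $C$, the expression equals $\int_{t(1)}^1 \alpha_t'\log\mu_\theta(m,t)_{x_0}\,\rmd t$. I would then simplify \eqref{eq:diff-kl-limit} in the same way, evaluating $\E_{q(x_t|x_0)}[\delta_{x_t,m}\,x_0^\top\log\mu_\theta(x_t,t)] = (1-\alpha_t)\log\mu_\theta(m,t)_{x_0}$ so that its integrand also collapses to $\alpha_t'\log\mu_\theta(m,t)_{x_0}$, and conclude that the two objectives agree up to the constant $C$. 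Conceptually, the index swap $k\leftrightarrow j$ that converts the forward-rate weighting $q_{t|0}(k)Q(t)_{kj}$ into the reverse-rate weighting $q_{t|0}(j)R^{x_0}(t)_{jk}$ via \eqref{eq:app_rate-forward-reverse} is the ``discrete integration by parts'' underlying the general version of this identity; for the masked case the sparsity makes the explicit two-term expectation the most transparent route. The main obstacle is the sign bookkeeping and verifying that the discarded pieces, namely the reverse diagonal $R_\theta(t)_{mm}$ and the factor $\log(-\alpha_t'/(1-\alpha_t))$, are genuinely independent of $\theta$ (which hinges on $\mu_\theta$ being a normalized distribution), so that nothing $\theta$-dependent is accidentally absorbed into $C$.
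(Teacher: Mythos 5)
Your proof is correct, but it takes a genuinely different route from the paper's. The paper proves the identity by transformation: it re-derives the continuous-time limit of the KL terms while keeping them in rate-matrix form, arriving at the intermediate expression $\int \E_{q_{t|0}(k|x_0)}\big[R^{x_0}(t)_{kk} - R_\theta(t)_{kk} + \sum_{j\neq k}R^{x_0}(t)_{kj}\log\frac{R^{x_0}(t)_{kj}}{R_\theta(t)_{kj}}\big]\rmd t$, absorbs the $\theta$-independent pieces into $C$, and only then invokes \Cref{lem:forward-reverse-rate} together with the index swap $\sum_k\sum_{j\neq k}f(j,k)=\sum_k\sum_{j\neq k}f(k,j)$ to trade the reverse rates $R^{x_0}$ for forward rates $Q$. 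You instead verify the identity by evaluating \emph{both} sides in closed form, exploiting the rank-one structure of $Q(t)$ and $R_\theta(t)$ and the two-atom support of $q_{t|0}(\cdot|x_0)$, and showing each collapses to $\int_{t(1)}^1 \alpha_t'\,x_0^\top\log\mu_\theta(m,t)\,\rmd t$ plus a schedule-dependent constant; \Cref{lem:forward-reverse-rate} appears only as commentary, not as a load-bearing step. Your computations check out: $R_\theta(t)_{mm}=\frac{\alpha_t'}{1-\alpha_t}$ is indeed $\theta$-independent because \eqref{eq:hat-x0} forces the mask component of $\mu_\theta(m,t)$ to zero, the diagonal expectation is $\alpha_t'$, and $\alpha_t\beta(t)=-\alpha_t'$ converts the cross term correctly. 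What each approach buys: yours is more elementary and makes the content of the proposition transparent for this specific model (both losses are the same weighted cross-entropy), whereas the paper's argument is structural---it explains how \eqref{eq:app_kl-limit-rate} arises as the rate-form limit of the discrete-time KLs for a general CTMC, which is what connects the result to the loss of \citet{campbell2022continuous} and would survive beyond the masked (rank-one) case on which your computation essentially depends. One small point worth making explicit in your write-up: for $k=m$ the inner sum pairs $Q(t)_{mj}=0$ with $\log R_\theta(t)_{jm}=\log 0$, so discarding it uses the convention $0\cdot\log 0=0$; this convention is already implicit in the statement of \eqref{eq:app_kl-limit-rate} itself, but since your proof leans on the sparsity pattern it deserves a sentence.
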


\begin{proof}%
To rewrite our objective $\mathcal{L}_\infty$ with the transition rate matrices, we first go back to \eqref{eq:diff-kl-t}. 
There, instead of plugging in the explicit form of $\bar{R}^{x_0}(t, s)$, we substitute it with \eqref{eq:rev-rate-x0} which leverages the transition rate $R^{x_0}(t)$. 
To simplify the notation, we assume $x_t = k$ and use the shorthand $R_\theta(t)_{kj} \triangleq R^{\mu_\theta(k)}(t)_{kj}$.  We then have
\begin{align}
    & \KL{q(x_{t-\Delta t}|x_t, x_0)}{p_\theta(x_{t-\Delta t}|x_t)} \notag \\
    &=\KL{\mathrm{Cat}(x_s; \bar{R}^{x_0}(t, t - \Delta t)^\top e_k)}{\mathrm{Cat}(x_s; \bar{R}^{\mu_\theta(k)}(t, t - \Delta t)^\top e_k)} \notag \\
    &= \sum_{j=0}^{m} e_k^\top (I + R^{x_0}(t)\Delta t + o(\Delta t)) e_j \log \frac{e_k^\top (I + R^{x_0}(t)\Delta t + o(\Delta t))e_j}{e_k^\top(I + R_\theta(t)\Delta t + o(\Delta t))e_j} \notag \\
    &= (1 + R^{x_0}(t)_{kk}\Delta t)\log\frac{1 + R^{x_0}(t)_{kk}\Delta t + o(\Delta t)}{1 + R_\theta(t)_{kk}\Delta t + o(\Delta t)} \notag \\
    &\quad + \sum_{j \neq k} (R^{x_0}(t)_{kj}\Delta t)\log \frac{R^{x_0}(t)_{kj}\Delta t + o(\Delta t)}{R_\theta(t)_{kj}\Delta t + o(\Delta t)} + o(\Delta t) \notag \\
    &= (R^{x_0}(t)_{kk} - R_\theta(t)_{kk})\Delta t + \sum_{j \neq k} (R^{x_0}(t)_{kj}\Delta t)\log \frac{R^{x_0}(t)_{kj}\Delta t + o(\Delta t)}{R_\theta(t)_{kj}\Delta t + o(\Delta t)} + o(\Delta t). \notag
\end{align}
For the last identity, we have used the fact that $\log (1 + x) = x + o(x)$. 
To obtain $\mathcal{L}_\infty$, we take the limit of $\mathcal{L}_T$ as $T\to \infty$, which is equivalent to letting $\Delta t = 1/T \to 0$. We obtain
\begin{align}
    \mathcal{L}_{\infty} &= \lim_{T \to \infty} \sum_{i=2}^T \E_{q(x_{t(i)}|x_0)} [\KL{q(x_{s(i)}|x_{t(i)}, x_0)}{p_\theta(x_{s(i)}|x_{t(i)})}] \notag \\
    &= \lim_{T \to \infty} \sum_{i=2}^T \E_{q(x_{t(i)}|x_0)}\Big[\Big(R^{x_0}(t(i))_{kk} - R_\theta(t(i))_{kk}  \notag \\
    &\quad + \sum_{j\neq k}R^{x_0}(t(i))_{kj} \log \frac{R^{x_0}(t(i))_{kj}\Delta t + o(\Delta t)}{R_\theta(t(i))_{kj}\Delta t + o(\Delta t)}\Big) \Delta t + o(\Delta t)\Big] \notag \\
    &= \int_{t(1)}^1 \E_{q_{t|0}(k|x_0)}\Big[R^{x_0}(t)_{kk} - R_\theta(t)_{kk} + \sum_{j\neq k}R^{x_0}(t)_{kj} \log \frac{R^{x_0}(t)_{kj}}{R_\theta(t)_{kj}}\Big] \rmd t. \notag 
\end{align}
Note that $R^{x_0}(t)$ is a constant matrix independent of $\theta$. 
Absorbing all constant terms into $C$, we have
\begin{equation*}
    \mathcal{L}_{\infty} = -\int_{t(1)}^1 \E_{q_{t|0}(k|x_0)}\Big[R_\theta(t)_{kk} + \sum_{j\neq k}R^{x_0}(t)_{kj} \log R_\theta(t)_{kj}\Big] \rmd t + C.
\end{equation*}
Next, we subtitute $R^{x_0}(t)$ with the forward transition rate using \Cref{lem:forward-reverse-rate}:
\begin{align*}
    \mathcal{L}_{\infty} &= -\int_{t(1)}^1 \E_{q_{t|0}(k|x_0)}\Big[R_\theta(t)_{kk} + \sum_{j\neq k}Q(t)_{jk}\frac{q_{t|0}(j|x_0)}{q_{t|0}(k|x_0)} \log R_\theta(t)_{kj}\Big] \rmd t + C \\
    &= -\int_{t(1)}^1 \Big[\sum_{k=0}^m q_{t|0}(k|x_0) R_\theta(t)_{kk} + \sum_{k=0}^m \sum_{j\neq k}Q(t)_{jk}q_{t|0}(j|x_0) \log R_\theta(t)_{kj}\Big] \rmd t + C \\
    &= -\int_{t(1)}^1 \Big[\sum_{k=0}^m q_{t|0}(k|x_0) R_\theta(t)_{kk} + \sum_{k=0}^m \sum_{j\neq k}Q(t)_{kj}q_{t|0}(k|x_0) \log R_\theta(t)_{jk}\Big] \rmd t + C,
\end{align*}
where the last identity used the discrete analog to integration-by-part (or summation-by-part): $\sum_{k=0}\sum_{j\neq k} f(j, k) = \sum_{k=0}\sum_{j\neq k} f(k, j)$. 
Rearranging the terms then gives \eqref{eq:app_kl-limit-rate}. 

\end{proof}

\subsection{Differences from \citet{campbell2022continuous}}
\label{app:campbell-diff}

\citet{campbell2022continuous} used the first term of \eqref{eq:app_kl-limit-rate} as the training loss. 
A key limitation of this loss function is from the inner summation term
\begin{align*}
    \sum_{j\neq k} Q(t)_{kj} \log R_\theta(t)_{jk}.
\end{align*}
For single dimension case, the sum is analytically computable due to the sparse structure of $R_\theta(t)$---if $x_t = k$ is mask, the second term disappears; otherwise the only possible neighbor $j$ is a mask. 
However, for multidimensional data, $j$ will represent all $N-1$ neighbors in the forward process, i.e., the states we get from mask out a single unmasked dimension of $x_t = k$.
Recall that $R_\theta(t)_{jk}$ is computed as substituting our neural network prediction model $\mu_\theta(j)$ for $x_0$ in $R^{x_0}(t)_{jk}$. 
Therefore, the summation together with $R_\theta(t)_{kk}$ requires $N$ evaluations of $\mu_\theta(\cdot)$. 
This is prohibitive since the neural network model is usually expensive. 
To resolve this issue, \citet{campbell2022continuous} proposed to rewrite the sum as
\begin{align*}
    \E_{j \sim \tilde{q}(\cdot|k)}\left[Z_k \log R_\theta(t)_{jk}\right] \text{\quad where\quad} \tilde{q}(j|k) = \frac{Q(t)_{kj}}{Z_k}, Z_k \triangleq \sum_{j'\neq k} Q(t)_{kj'}
\end{align*}
and estimate it through Monte Carlo. 
Taking into account the outer expectation under $q_{t|0}(k|x_0)$, the computation of the loss then becomes a doubly stochastic estimate (using $k\sim q_{t|0}(k|x_0)$ and $j\sim \tilde{q}(j|k)$) which suffers from large variance. 
In contrast, the form of our loss \eqref{eq:diff-kl-limit} only requires evaluating $\mu_\theta$ once for a single stochastic estimation of the expectation w.r.t. $q(x_t|x_0)$.

\subsection{Score parameterization}
\label{app:score}

We provide a simpler derivation of the score-based loss \citep{benton2024denoising,lou2023discrete} below.
We start from the form of the ELBO in \eqref{eq:app_kl-limit-rate} and rewrite it as
\begin{align}
    \mathcal{L}_{\infty} &= \int_{t(1)}^1 \E_{q_{t|0}(k|x_0)}\Big[\sum_{j\neq k}\Big(R^{\mu_\theta}(t)_{kj} - R^{x_0}(t)_{kj} + R^{x_0}(t)_{kj} \log \frac{R^{x_0}(t)_{kj}}{R^{\mu_\theta}(t)_{kj}}\Big)\Big] \rmd t. 
    \label{eq:app_diff-kl-cont-in-rate}
\end{align}
For the last identity we used the zero-row-sum property of transition rate matrix:
\begin{equation*}
    R^{x_0}(t)_{kk} = -\sum_{j\neq k} R^{x_0}(t)_{kj}. 
\end{equation*}
If we plug \eqref{eq:app_rate-forward-reverse} into \eqref{eq:app_diff-kl-cont-in-rate} and reparameterize with a score model 
\begin{align} \label{eq:app_score-model-q}
    s_\theta(x_t)_j \triangleq \frac{q_{t|0}(j|\mu_\theta(x_t))}{q(x_t|\mu_\theta(x_t))},
\end{align}
we recover the score entropy loss function from \citet{benton2024denoising,lou2023discrete}:
\begin{equation} \label{eq:app_score-entropy}
    \mathcal{L}_{\infty} = \int_{t(1)}^1 \E_{q_{t|0}(k|x_0)}\Big[\sum_{j\neq k} Q(t)_{jk}\Big(s_\theta(k)_j - \frac{q_{t|0}(j|x_0)}{q_{t|0}(k|x_0)}\log s_\theta(k)_j + \psi\Big(\frac{q_{t|0}(j|x_0)}{q_{t|0}(k|x_0)}\Big)\Big)\Big] \rmd t, 
\end{equation}
where $\psi(y) \triangleq y\log y - y$. 
Note that our derivation above is different and simpler than that of \citet{campbell2022continuous} (which \citet{lou2023discrete} is based on) since we leverage the conditional reverse transition rate given $x_0$ instead of the transition rate matrix of the reverse process. 
We can further simplify the loss with the following relationship between the conditional score and $x_0$:
\begin{align}\label{eq:app_score-to-x0}
    \frac{q_{t|0}(j|x_0)}{q_{t|0}(k|x_0)} = \frac{x_0^\top \bar{Q}(t) e_j}{x_0^\top \bar{Q}(t) e_k} 
    = \frac{\alpha_t}{1 - \alpha_t}x_0^\top e_j & \text{ for }  k = m, j\neq k. 
\end{align}
Note that only the result under the case $k=m$ is needed.
This is because when $x_t$ is unmasked, at any time between $0$ and $t$, the state must stay unchanged and remain $x_0$. 
As a result, $\KL{q(x_{t-\Delta t}|x_t, x_0)}{p_\theta(x_{t-\Delta t}|x_t)} = 0$ for $x_t \neq m$. 
From \eqref{eq:Qt}, we know $Q(t)_{jk} = \beta(t)(\delta_{mk} - \delta_{jk})$. 
Combining  \eqref{eq:app_score-to-x0} and \eqref{eq:app_score-entropy}, we get
\begin{align} \label{eq:app_masked-kl-inf-score}
    \mathcal{L}_\infty = \int_{t(1)}^1 \beta(t) \Big( \E_{q_{t|0}(k|x_0)}\big[\delta_{mk} \big(\sum_{j\neq k} s_\theta(k)_j - \frac{\alpha_t}{1 - \alpha_t} x_0^\top\log s_\theta(k)\big)\big] + \psi\big(\frac{\alpha_t}{1 - \alpha_t}\big)\Big) \rmd t.
\end{align}
Further, we can show the connection between \eqref{eq:app_masked-kl-inf-score} and \eqref{eq:diff-kl-limit} by reverting the score parameterization to a mean parameterization using \eqref{eq:app_score-model-q}, or equivalently
$
    s_\theta(x_t)_j = \frac{\alpha_t}{1 - \alpha_t} \mu_\theta(x_t)^\top e_j
$. 
By doing so, we obtain
\begin{align}
    \mathcal{L}_\infty = \int_{t(1)}^1 \beta(t) \Big( \E_{q_{t|0}(k|x_0)}\big[\delta_{mk} \big(\sum_{j\neq k}s_\theta(k)_j - \frac{\alpha_t}{1 - \alpha_t} x_0^\top\log \mu_\theta(k) \big] + \frac{\alpha_t}{1 - \alpha_t}\big) \rmd t. \notag
\end{align}
Observing that
\begin{align} \label{eq:app_score-constraint}
    \sum_{j\neq m} s_\theta(m)_j = \frac{\alpha_t}{1 - \alpha_t}, 
\end{align}
we conclude that this recovers the objective in \eqref{eq:diff-kl-limit}. 
Interestingly, in \citet{lou2023discrete} the score parameterization is not constrained to satisfy \eqref{eq:app_score-constraint}. 
That means the learned reverse model might be incompatible with the forward process. 

Below, we prove \Cref{prop:score-mean-relation} using the result from \Cref{eq:app_score-to-x0}.

\begin{proofof}{\Cref{prop:score-mean-relation}}
    \begin{align*}
        \frac{q_{t}(j)}{q_{t}(m)} &= \frac{\sum_{x_0}q_{t|0}(j|x_0)q(x_0)}{q_t(m)} 
        = \frac{\sum_{x_0}q_{t|0}(j|x_0)q_{0|t}(x_0|m)}{q_{t|0}(m|x_0)} 
        = \E_{x_0|x_t=m}\left[\frac{q_{t|0}(j|x_0)}{q_{t|0}(m|x_0)}\right] \\
        &= \E_{x_0|x_t=m}\left[\frac{\alpha_t}{1 - \alpha_t}x_0^\top e_j\right] 
        = \frac{\alpha_t}{1 - \alpha_t}\E[x_0|x_t=m]^\top e_j.
    \end{align*}
\end{proofof}

\subsection{Other related work.} \label{app:other-related}

\paragraph{MaskGIT~\citep{chang2022maskgit}.} MaskGIT is a diffusion-inspired iterative denoising model for discrete image tokens obtained through models such as VQ-VAE~\citep{van2017neural}. 
Training of  MaskGIT follows the steps: (a) Sample $t \in [0, 1]$. (b) Given a mask scheduling function $\gamma(t)$, %
sample $\gamma(t) N$ tokens to place masks. (c) For data $x_0$ of size $(m+1)\times \numtokens$  and the partially masked state $x_t$, minimize the negative log-likelihood
    \begin{align}\label{eq:Maskgitloss}
        \mathcal{L}_{\text{MaskGIT}} = -\int^1_0
        \E_{x_t}\Big[\textstyle \sum_{\indextoken: x_{t}^{(\indextoken)}= m} (x_{0}^{(\indextoken)})^\top \log \mu_\theta^{(\indextoken)}(x_t,t)\Big]\rmd t.
    \end{align}
Our forward process satisfies
$
q_{t|0}(m|x_0) = 1 - \alpha_t
$.
Therefore, when we set the mask scheduling function as
$\gamma(t) = 1 - \alpha_t$ 
we obtain a loss similar to \eqref{eq:multi-diff-kl-limit} without the $\frac{\alpha_t'}{1 - \alpha_t}$ weighting. 
Note that there remains a difference in the sampling distribution of $x_t$: in the masked diffusion forward process, tokens are sampled independently and do not necessarily yield exactly $(1 - \alpha_t)N$ mask tokens at time $t$, though the expected number is $(1 - \alpha_t)N$.
One might be interested in whether the uniform weighting can be recovered by selecting an appropriate schedule $\alpha_t$. However, solving $\alpha_t$ such that 
$\alpha_t' = \alpha_t - 1$ yields $\alpha_t = c e^t + 1$ and there is no $c$ that satisfies both $\alpha_0 = 1$ and $\alpha_1 = 0$. 
This shows that training with the MaskGIT loss \eqref{eq:Maskgitloss} may not be faithfully optimizing the model likelihood.   

\paragraph{Discrete flow matching~\citep{campbell2024generative}.} For the linear schedule $\alpha_t = 1 - t$, our reverse transition rate matrix~\eqref{eq:rev-rate-x0} conditioned on $x_0$ is:
\begin{align*}
    R^{x_0}(t) = -\frac{\alpha'_t}{1 - \alpha_t} e_m (x_0 - e_m)^\top = \frac{1}{t} e_m(x_0 - e_m)^\top.
\end{align*}
This is the same as the conditional reverse  transition rate used in \citet[Eq. (22)]{campbell2024generative}---note that their time $t$ is reversed, and the rate matrix was therefore in the form $R^{x_0}(t) = \frac{1}{1 - t} e_m(x_0 - e_m)^\top$. 

\paragraph{SDDM~\citep{sunscore}.} 
\citet{sunscore} proposed a pseudo-likelihood-like objective for training discrete diffusion models that can also be applied to masked diffusion. 
However, their objective encounters the same challenge as \citet{campbell2022continuous} --- requiring $N$ passes of the mask prediction model. 
To mitigate this, they introduced a new transformer architecture, which unfortunately leads to some performance degradation.

\paragraph{Blackout diffusion~\citep{santos2023blackout}.}
\citet{santos2023blackout} proposed a ``blackout'' diffusion process that gradually diffuses images to a black state. While this approach is similar to masked diffusion on binary data, key differences emerge when dealing with larger state spaces. In their method, image pixel intensities gradually fade out, whereas ours directly transition to a mask state. Our method offers more flexibility, being applicable to general discrete state spaces without requiring predefined structural relationships. It also demonstrates competitive performance in image generation, achieving this without knowing pixel value proximity.

\paragraph{SUNDAE~\citep{savinov2021step, Han2024-oe}.} 
Unlike masked diffusion, SUNDAE %
uniformly corrupts data with random tokens in the vocab (known as uniform discrete diffusion~\citep{austin2021structured}). 
Additionally, it uses a second loss term from cross entropy between clean data and 1-step unrolled model prediction. Similar ideas have been proposed in \citep{bengio2015scheduled}.

\section{Details for state-dependent rates}
\label{app:state-dependent}

\subsection{Derivations and time continuous limit} \label{app:state-dependentfirst}

All derivations in this section assume that $x_t$ 
is a single token, while 
for $\numtokens$ tokens the masked diffusion with state-dependent rates factorises across 
the $\numtokens$ tokens. 
Learning from data of $\numtokens$ 
tokens using variational inference is discussed in  \Cref{app:grad-est}. 

Given the forward transition $q(x_t | x_s)$ 
and marginal $q(x_s | x_0)$ derived in main text
(\Cref{sec:dependent_rates})
The reversal given $x_0$ is $q(x_s|x_t, x_0) = \mathrm{Cat}(x_s; \bar{R}^{x_0}(t, s)^\top x_t)$ for
\begin{align*}
    \bar{R}^{x_0}(t, s)_{jk} = \begin{cases}
        \big(\frac{\alpha_s - \alpha_t}{\mathbf{1} - \alpha_t}\big)^\top x_0 x_0^\top e_k & j = m, k \neq m \\
        \big(\frac{\mathbf{1} - \alpha_s}{\mathbf{1} - \alpha_t}\big)^\top x_0 & j = m, k = m \\
        \delta_{jk} & j \neq m. 
    \end{cases}
\end{align*}
or alternatively can be written as 
\begin{align}
 q(x_s | x_t, x_0) 
& = \frac{q(x_t | x_s) 
q(x_s | x_0)}
{q(x_t | x_0)} 
\nonumber \\
& = \frac{
\left[
\frac{\alpha_t^\top x_s}{\alpha_s^\top x_s} x_s^\top x_t + (1 -\frac{\alpha_t^\top x_s}{\alpha_s^\top x_s}) e_m^\top x_t
\right] 
\left[ \alpha_s^\top x_0 x_0^\top x_s + (1 - \alpha_s^\top x_0) e_m^\top x_s 
\right]}
{\left[ \alpha_t^\top x_0 x_0^\top x_t + (1 - \alpha_t^\top x_0) e_m^\top x_t 
\right]}.
\label{eq:app_qxsxtx0}
\end{align}
To simplify this expression we consider the two cases: either $x_t = m$ (i.e.\ $x_t$ is mask) or $x_t \neq m$ where in the second case $x_t = x_0$. For the case $x_t =m$, the denominator in \eqref{eq:app_qxsxtx0} 
simplifies as 
$$
q(x_t=m|x_0) = 1 - \alpha_t^\top x_0
$$ 
due to $x_0^\top x_t =0$ 
since $x_0 \neq m $, i.e.\ the observed token $x_0$ cannot be a mask.  
Then given that $x_t = m$  the probability that $x_s = x_t = m$ is
\begin{equation}
\frac{1- \alpha_s^\top x_0}
{1  -  \alpha_t^\top x_0}
= 
\frac{({\bf 1} - \alpha_s)^\top x_0}
{ ({\bf 1} -  \alpha_t)^\top x_0}
= \left(\frac{{\bf 1} - \alpha_s
}
{ {\bf 1} -  \alpha_t}\right)^\top x_0
\label{eq:app_qxsxt_em_x01}
\end{equation}
while the remaining probability  for $x_s = x_0 \neq m$ is
\begin{equation}
\frac{(\alpha_s - \alpha_t)^\top x_0}
{1 -  \alpha_t^\top x_0}
= 
\frac{(\alpha_s - \alpha_t)^\top x_0}
{ ({\bf 1} -  \alpha_t)^\top x_0}
= \left(\frac{\alpha_s - \alpha_t}
{ {\bf 1} -  \alpha_t}\right)^\top x_0.
\label{eq:app_qxsxt_em_x02}
\end{equation}
Then, combining \eqref{eq:app_qxsxt_em_x01} and \eqref{eq:app_qxsxt_em_x02} to write $q(x_s | x_t=m, x_0)$ in an unified way yields the expression 
\eqref{eq:xsxt_em_x_0} 
in the main \Cref{sec:dependent_rates}. In the second case, 
when $x_t = x_0 \neq m$, 
$q(x_s | x_t \neq m, x_0 )$ from \eqref{eq:app_qxsxtx0} simplifies dramatically 
and it becomes $q(x_s | x_t \neq m, x_0 ) = x_t^\top x_s$ which is a point mass that sets $x_s=x_t$. 

\paragraph{Derivation of the continuous-time limit of the loss in 
\eqref{eq:vec-diff-kl-limit}.}
To simplify the notation, we let $\xi_{s,t} \triangleq \frac{\alpha_s - \alpha_t}{1 - \alpha_t}$. 
We first compute the KL divergence terms in the discrete-time ELBO as
\begin{align*}
    &\KL{q(x_s|x_t, x_0)}{p_\theta(x_s|x_t)} \notag \\
    &= \begin{cases}
    \sum_{x_s=0}^{m} q(x_s|x_t, x_0)\log \frac{q(x_s|x_t, x_0)}{p_\theta(x_s|x_t)} & x_t = m\\
    0 & x_t \neq m 
    \end{cases} \notag 
    \\ 
    &= \delta_{x_t,m}\Big[\sum_{k\neq m} \xi_{s,t}^\top x_0 x_0^\top e_k 
    \log \frac{\xi_{s,t}^\top x_0 x_0^\top e_k}{\xi_{s,t}^\top \diag(\mu_\theta(x_t,t)) e_k} + (1 - \xi_{s,t})^\top x_0 \log \frac{(1 - \xi_{s,t})^\top x_0}{(1 - \xi_{s,t})^\top \mu_\theta(x_t,t)}\Big] \notag \\
    &= \delta_{x_t,m}\Big[ -\xi_{s,t}^\top x_0 x_0^\top \log \mu_\theta(x_t,t) + (1 - \xi_{s,t})^\top x_0 \log \frac{(1 - \xi_{s,t})^\top x_0}{(1 - \xi_{s,t})^\top \mu_\theta(x_t,t)}\Big].
\end{align*}
Let $\Delta_t \triangleq \frac{1}{T} = t(i) - s(i)$ for all $i$. 
Plugging $\alpha_{t - \Delta t} = \alpha_t - \alpha_t'\Delta t + o(\Delta t)$ into the above formula and letting $\gamma_t = \frac{\alpha_t'}{1 - \alpha_t}$, we get
\begin{align*}
    &\KL{q(x_s|x_t, x_0)}{p_\theta(x_s|x_t)} \notag \\
    &= 
    \delta_{x_t,m} \left[ \gamma_t^\top x_0 x_0^\top \log \mu_\theta(x_t,t) \Delta t + \left(1 + \gamma_t^\top x_0 \Delta t\right) \cdot \log \frac{1 + \gamma_t^\top x_0 \Delta t + o(\Delta t)}{1 + \gamma_t^\top \mu_\theta(x_t,t) \Delta t + o(\Delta t)} + o(\Delta t)\right] \\
    &= 
    \delta_{x_t,m} \left[ \gamma_t^\top x_0 x_0^\top \log \mu_\theta(x_t,t) \Delta t + \left(1 + \gamma_t^\top x_0 \Delta t\right) \left(\gamma_t^\top x_0 \Delta t - \gamma_t^\top \mu_\theta(x_t,t) \Delta t + o(\Delta t) \right) + o(\Delta t)\right] \\
    &= 
    \delta_{x_t,m} \left[ \gamma_t^\top x_0 x_0^\top \log \mu_\theta(x_t,t) \Delta t + \gamma_t^\top x_0 \Delta t - \gamma_t^\top \mu_\theta(x_t,t) \Delta t  + o(\Delta t)\right] \\
    &= 
    \delta_{x_t,m}\cdot \gamma_t^\top (x_0 x_0^\top \log \mu_\theta(x_t,t) + x_0  - \mu_\theta(x_t,t))\Delta t   + o(\Delta t).
\end{align*}
Therefore,
\begin{align*}
    &\lim_{T \to \infty} \;\sum_{i=2}^T \E_{q(x_{t(i)}|x_0)}[\KL{q(x_{s(i)}|x_{t(i)}, x_0)}{p_\theta(x_{s(i)}|x_{t(i)})}] \\
    &=\lim_{T \to \infty} \;\sum_{i=2}^T \E_{q(x_{t(i)}|x_0)}[\delta_{x_{t(i)},m}\cdot \gamma_t^\top (x_0 x_0^\top \log \mu_\theta(x_{t(i)}, t(i)) + x_0  - \mu_\theta(x_{t(i)}, t(i)))\Delta t   + o(\Delta t)] \\
    &= \int_{t(1)}^1 \gamma_t^\top \E_{q(x_{t(i)}|x_0)}[\delta_{x_t,m}\cdot  (x_0 x_0^\top \log \mu_\theta(x_t,t) + x_0  - \mu_\theta(x_t,t))] \rmd t.
\end{align*}
Letting $t(1) \to 0$ proves the result.

\subsection{Training and gradient estimation}
\label{app:grad-est}

The model is applied 
to data consisted of $\numtokens$
tokens where $x_0 = (x_0^{1}, \ldots, x_0^{(\numtokens)})$ and where each state in the masked diffusion is 
 $x_t = (x_t^{1}, \ldots, x_t^{(\numtokens)})$. The  reverse generated 
 model has a factorizing transition conditional of the form 
 $\prod_{\indextoken=1}^\numtokens p_\theta(x_s^{(\indextoken)} | x_t)$
where 
$p_\theta(x_s^{(\indextoken)} | x_t) = q(x_s^{(\indextoken)} | x_t^{(\indextoken)}, \mu_\theta^{(\indextoken)}(x_t,t))$  has a form that depends on whether  $x_t^{(\indextoken)}=m$ or
$x_t^{(\indextoken)} \neq m$. For the first case:  
$$
p_\theta(x_s^{(\indextoken)} | x_t^{(\indextoken)} = m, \{x_t^{(k)}\}_{k \neq \indextoken} ) 
= \Big(\frac{{\bf 1} - \alpha_s
}
{ {\bf 1} -  \alpha_t}\Big)^\top \mu^{(\indextoken)}_\theta(x_t,t) e_m^\top x_s^{(\indextoken)} 
+ \Big(\frac{\alpha_s - \alpha_t}
{ {\bf 1} -  \alpha_t}\Big)^\top \diag(\mu^{(\indextoken)}_\theta(x_t,t))  x_s^{(\indextoken)}, 
$$
where $\mu^{(\indextoken)}_\theta(x_t,t) = \text{softmax}(f_\theta(x_t))$ is a $m+1$ dimensional probability vector modelled by a NN (where the final value is constrained to be zero since  $\mu^{(\indextoken)}_\theta(x_t,t)$ is a reconstruction of  
$x_0^{(\indextoken)}$ which
cannot be mask, so in practice the NN classifier needs to have a softmax output only over the $m$ actual token classes). Crucially, note that the NN classifier receives as input the full
state $x_t$ of all tokens, while additional time features to encode $t$ are also included. When 
$x_t^{(\indextoken)} \neq m$ 
the reverse transition 
model is set to be 
$
p_\theta(x_s | x_t^{(\indextoken)} \neq m, \{x_t^{(k)}\}_{k \neq \indextoken} ) = (x_t^{(\indextoken)})^\top x_s^{(\indextoken)}$ which matches precisely 
$q(x_s^{(\indextoken)} | x_t^{(\indextoken)}=m, x_0^{(\indextoken)}) = (x_t^{(\indextoken)})^\top x_s^{(\indextoken)}$ from the forward process.   

The full negative lower bound for state-dependent rates and
assuming $\numtokens$ tokens is given by  
\begin{align} \label{eq:genmd4-n}
\mathcal{L}^{(N)}_\infty 
    = \int_0^1 \Big(\frac{\alpha_t'}{1 - \alpha_{t}}\Big)^\top \E_{q(x_{t}|x_0)}\Big[\textsum_{\indextoken:x_t^{(\indextoken)}=m }  (x_0^{(\indextoken)} - \mu_\theta^{(\indextoken)}(x_t,t) + x_0^{(\indextoken)} (x_0^{(\indextoken)})^\top \log \mu_\theta^{(\indextoken)}(x_t,t))\Big] \rmd t. 
\end{align} 
Given that each $\alpha_{t,i} = 1 - t^{w_i}$, the reverse model becomes 
$$
p_\theta(x_s^{(\indextoken)} | x_t^{(\indextoken)} \neq m, \{x_t^{(k)}\}_{k \neq \indextoken} ) 
=   \left(e^{ w \log \frac{s}{t}}\right)^\top   \mu_\theta^{(\indextoken)}(x_t,t) e_m^\top x_s^{(\indextoken)} 
+ \left(1 - e^{ w \log \frac{s}{t}}\right)^\top \text{diag}(\mu_\theta^{(\indextoken)}(x_t,t))  x_s^{(\indextoken)}, 
$$
where $w$ is the $m+1$ dimensional vector of all $w_i$s. Note that the probability of $x_s^{(\indextoken)}$ staying in the mask state, i.e., $x_s^{(\indextoken)} = m$  depends on the full $x_t$ and it is given by 
$
\left(e^{ w \log \frac{s}{t}}\right)^\top \mu_\theta^{(\indextoken)}(x_t,t)
= \sum_{i=0}^{m-1} e^{ w_i \log \frac{s}{t}} \mu_\theta^{(\indextoken)}(x_t,t)_i
$
while the probability for $x_s^{(\indextoken)}$ 
to take a certain non-mask token value $i$ is 
$
\left(1 - e^{ w_i \log \frac{s}{t}}\right)
\mu_\theta^{(\indextoken)}(x_t,t)_i. 
$
The gradient wrt $t$ is $\alpha_{t,i}' = - w_i t^{w_i -1}$ and 
$\frac{\alpha_{t,i}'}{1 - \alpha_{t,i}} = - \frac{w_i}{t}$ the above loss is written as 
\begin{align*}    
\mathcal{L}^{(N)}_\infty 
    = - \int_0^1 \frac{1}{t} w^\top \E_{q(x_{t}|x_0)}\left[\textsum_{\indextoken:x_t^{(\indextoken)}=m }  (x_0^{(\indextoken)} - \mu_\theta^{(\indextoken)}(x_t,t) + x_0^{(\indextoken)} (x_0^{(\indextoken)})^\top \log \mu_\theta^{(\indextoken)}(x_t,t))\right] \rmd t,
\end{align*} 
where $w$ is the vector of all $w_i$'s. 
An unbiased gradient over the NN parameters $\theta$ is straightforward 
to obtain since we just need to sample one time point $t$ and an $x_t \sim q(x_t|x_0)$ to approximate the integral and  expectation and then
use the gradient: 
$$
- \nabla_\theta  \sum_{\indextoken:x_t^{(\indextoken)}=m } 
\frac{1}{t}
w^\top
\left(x_0^{(\indextoken)} - \mu_\theta^{(\indextoken)}(x_t,t) + x_0^{(\indextoken)} (x_0^{(\indextoken)})^\top \log \mu_\theta^{(\indextoken)}(x_t,t) \right).
$$
The gradient wrt the  $w$ parameters is more complex since   
these parameters appear also in the discrete 
distribution $q(x_t | x_0)$ which is not reparametrizable. To deal with this we need REINFORCE unbiased gradients~~\citep{Glynn:1990,williams1992simple}, and in our implementation we consider REINFORCE leave-one-out (RLOO)~\citep{salimans2014using,Kool2019Buy4R}
with two samples. Firstly, the exact gradient wrt $w$ of the exact loss is written as 
\begin{equation}   
\label{eq:gradient_w_1}
 - \int_0^1 \frac{1}{t} \E_{q(x_{t}|x_0)}\left[ g(x_t, x_0)\right] \rmd t 
- \int_0^1 \frac{1}{t} \E_{q(x_{t}|x_0)}   \left[ f(x_t, x_0)\nabla_w \log  q(x_{t}|x_0) \right] \rmd t. 
\end{equation} 
where 
$$g(x_t, x_0) 
= \sum_{\indextoken:x_t^{(\indextoken)}=m }  (x_0^{(\indextoken)} - \mu_\theta^{(\indextoken)}(x_t,t) + x_0^{(\indextoken)} (x_0^{(\indextoken)})^\top \log \mu_\theta^{(\indextoken)}(x_t,t)), \ \ \
f(x_t, x_0) 
= w^\top g(x_t, x_0).
$$
Note that $g(x_t, x_0)$ is a vector while $f(x_t,x_0)$ 
is a scalar. 
The left term in 
\eqref{eq:gradient_w_1}
is easy since it just requires 
sampling $t$ and $x_t \sim q(x_t | x_0)$, while the right term is the  REINFORCE term which could have high variance. For this second term we use  RLOO with two samples $x_t^{1}, x_t^2$ and construct the unbiased estimate 
\begin{equation*}   
- \frac{1}{2 t}  \left( \nabla_w \log  q(x_t^1|x_0) - 
\nabla_w \log  q(x_t^2|x_0) \right)  \left[ f(x_t^1, x_0) - f(x_t^2, x_0) \right]. 
\end{equation*} 
Thus, the overall unbiased gradient for $w$ we use is 
\begin{equation*} 
- \frac{1}{2 t} \left\{ g(x_t^1, x_0) + g(x_t^2, x_0)  + \left( \nabla_w \log  q(x_t^1|x_0) - 
\nabla_w \log  q(x_t^2|x_0) \right)  \left[ f(x_t^1, x_0) - f(x_t^2, x_0) \right] \right\}. 
\end{equation*} 

\section{Experimental Details} \label{app:exp-details}

In all experiments, the model is trained with a continuous-time loss while samples are drawn from the discrete-time reverse model of 1000 timesteps unless otherwise noted.
We used an exponential moving average factor 0.9999 for all evaluation including sample generation.

\subsection{text8}
\label{app:exp-text8}

We followed the standard dataset split as in \citet{austin2021structured,lou2023discrete} and trained our models on text chunks of length 256 for 1 million steps with batch size 512. 
All models in the table used a standard 12-layer transformer architecture unless otherwise noted.
Our transformer has also the same number of heads (12) and hidden dimension (784) as in \citet{austin2021structured,lou2023discrete}.

We used the continuous-time ELBO and drew one sample of $t$ for each data to estimate the integral.
To reduce the variance of training, we used the same antithetic sampling trick described in \citet{kingma2021variational} for continuous diffusion models.
We used the linear masking schedule $\alpha_t = 1 - t$ and added a small shift $\epsilon = 10^{-4}$ when $t$ is close to $0$ and $1$ to ensure numerical stability. 
The shifted schedule is $\alpha_t = (1 -2\epsilon) (1 - t)  + \epsilon$. 
The shift leads to a support mismatch between $q(x_1|x_0)$ and the prior $p(x_1)$, leading to an undefined  KL divergence term. 
We explain in \cref{app:undefined-kl} how to modify the prior distribution to allow small uniform probabilities in non-mask states to mitigate this problem.
The shift leads to a non-zero reconstruction term and  KL divergence term for the prior distribution but both are of negligible scale so we can safely ignore them when reporting the ELBO. 

We used a cosine learning rate schedule with a linear warm up of 2000 steps.
We applied channel-wise dropout of rate $0.05$ and used AdamW optimizer with learning rate 0.0003 and a weight decay factor of 0.03. 
Our model is trained on 16 TPU-v5 lite for less than a day.

\subsection{OpenWebText}

We kept 2\% of the original training set for validation. 
Our small and medium transformer model have the same number of layers, heads, and hidden dimensions as in \citet{lou2023discrete} and our tokenizer was also kept the same with a vocabulary size of around 50k. 
The training objective, masking schedule and other architectural choices were kept the same with the text8 experiment. 
We kept the training hyperparameters the same as text8 experiment except that we reduced the dropout rate to 0.02.

\subsection{FineWeb-Edu}
We kept the same training setup as the OpenWebText experiments. Our transformer models have the same number of layers, heads, and hidden dimensions as those of GPT-2 models. We use the same GPT-2 tokenizer.

For Hellaswag evaluation, we concatenate question with each answer option, tokenize the concatenated string, pad to the length of 1024. The padded token sequence gets fed to our MD4 model's loss function for likelihood evaluation. We average 32 Monte Carlo samples to reduce variance. The answer with the highest likelihood estimate is the model's prediction.

\subsection{Images}
\label{app:images}

We used the same linear masking schedule as in previous experiments in all likelihood results. 
We used the same U-Net plus self-attention architectures from the continuous diffusion model described in \citet{kingma2021variational} for CIFAR-10, except that we did not use Fourier feature inputs and added an additional input embedding layer with embedding size the same as the hidden dimension of the model.
For ImageNet $64\times 64$, we reduced the number of residual blocks (in one side of the U-Net structure) from 64 to 48 and added a 12-layer diffusion transformer~\citep{peebles2023scalable} with 768 hidden dimension and 12 heads in the middle. 

For both datasets we used AdamW optimizer and trained for 2M iterations. 
We used learning rate 0.0004, batch size 256, weight decay factor 0.01 for CIFAR-10 and learning rate 0.0002, batch size 512, weight decay factor 0.03 for ImageNet 64$\times$64. 
The learning rate follows a cosine annealing after 100 warm up steps. 
Our CIFAR-10 model is trained on 32 TPU-v5 lite for 24 hours.
Our ImageNet-$64\times64$ model is trained on 256 TPU-v5 lite for 3.5 days.

As explained in \Cref{sec:sampling}, we have observed that the cosine schedule leads to better sample quality so we used it to train a cheaper model for sample visualization. 
This model differs from the one that achieves best likelihood in that we used 8 residual blocks (in one side of the UNet structure) and a 20-layer diffusion transformer in the middle. 
All other configurations are kept the same.

\section{Additional Results} 
\label{app:results}

\subsection{Sample quality evaluation by GPT-2}
 
We use the GPT-2 large model to evaluate the perplexity of samples generated by our model, following \citet{lou2023discrete}.
Results are shown in \Cref{fig:generative-ppl}. 

\begin{figure}[h]
    \centering
    \includegraphics[width=0.7\textwidth]{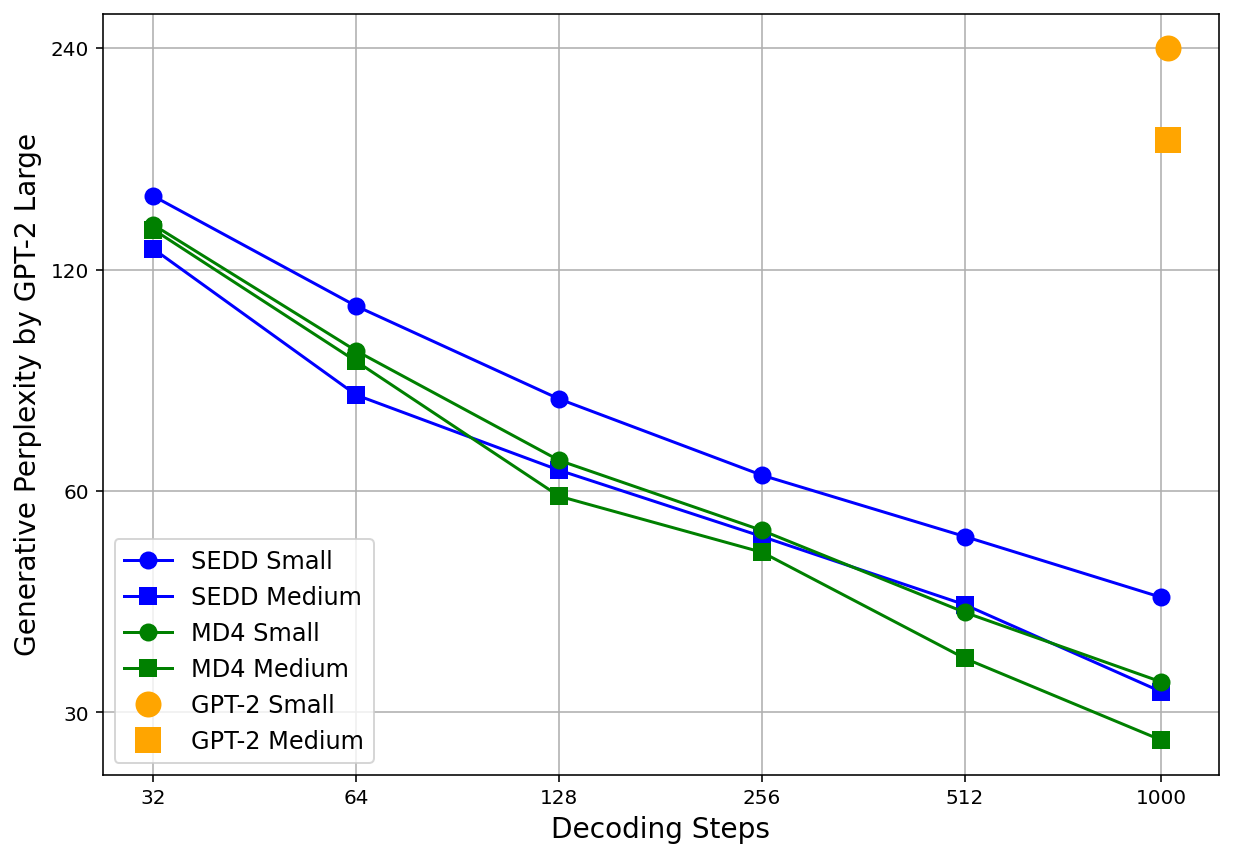}
    \caption{Generative perplexity evaluated by GPT-2 Large following \citet{lou2023discrete}. We compare MD4 against the GPT-2 checkpoint (autoregressive baseline) and SEDD (the previous best discrete diffusion model on this task) in generating 1024-token text sequences. We investigate the effects of two orthogonal factors on sample quality: model size and decoding steps. The numbers for GPT-2 and SEDD are from \citet{lou2023discrete}.} 
    \label{fig:generative-ppl}
\end{figure}

\subsection{Perplexity on OpenWebText validation set}

\Cref{tab:owt-eval-ppl} reports the final perplexity number achieved on OpenWebText validation set, corresponding to  \Cref{fig:owt-eval}. 

\begin{table}[h]
    \centering
    \footnotesize
    \caption{Perplexity on OpenWebText validation set.}
    \label{tab:owt-eval-ppl}
    \vskip 0.1in
    \begin{tabular}{llr}
    Size & Method & Perplexity ($\downarrow$) \\
    \midrule 
    Small  
     &  Gaussian Diffusion & $\le$ 27.28 \\
     &  SEDD Absorb (reimpl.) & $\le$ 24.10 \\
     &  MD4 (Ours) & $\le$ 22.13 \\
     &  GenMD4 (Ours) & $\le$ \bf 21.80 \\
     \midrule
    Medium 
     & MD4 (Ours) & $\leq$ \bf 16.64 \\
    \bottomrule
    \end{tabular}
\end{table}

\subsection{FID evaluation of MD4 trained on ImageNet 64$\times$64}

We provide the FID numbers corresponding to \Cref{fig:imagenet-fid} in \Cref{tab:imagenet-fid}.

\begin{table}[h]
\caption{FID of 50k samples generated by MD4 trained on ImageNet 64$\times$ 64, corresponding to \Cref{fig:imagenet-fid}. Top three rows show results from an unconditional model, while the bottom row is from a model conditioned on class labels. Uniform discretization grid is used in \Cref{alg:sampling} unless otherwise noted. }
\label{tab:imagenet-fid}
\vskip 0.1in
\footnotesize
\centering
\begin{tabular}{lrrrr}
\multirow{2}{*}{Method} &  \multicolumn{4}{c}{Timesteps $T$} \\
& \multicolumn{1}{c}{64} & \multicolumn{1}{c}{128} & \multicolumn{1}{c}{256} & \multicolumn{1}{c}{512} \\
\midrule %
Linear $\alpha_t$ & 193.81 & 128.18 & 72.94 & 50.21\\ 
Linear $\alpha_t$, cosine grid & \bf 42.07 & 25.16 & 18.31 & \bf 18.22\\  
Cosine $\alpha_t$ & 47.46 & \bf 23.84 & \bf 17.8 & 18.74\\ 
\midrule
Cosine $\alpha_t$, class conditional & \bf 30.75 & \bf 11.39 & \bf 7.13 & \bf 7.8 \\    
\bottomrule %
\end{tabular}
\end{table}

\subsection{Additional unconditional generation from MD4 trained on ImageNet 64$\times$64} 
We provide more unconditional generation results from our pixel-level modeling experiments on ImageNet 64$\times$64 in \Cref{fig:imagenet64-app}.

\begin{figure}[h]
    \centering
    \includegraphics[width=0.9\textwidth]{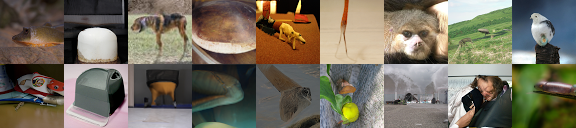}
    \vskip 0.1in
    \includegraphics[width=0.9\textwidth]{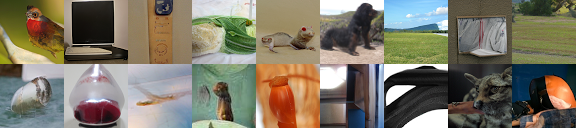}
    \vskip 0.1in
    \includegraphics[width=0.9\textwidth]{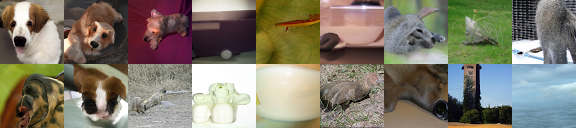}
    \vskip 0.1in
    \includegraphics[width=0.9\textwidth]{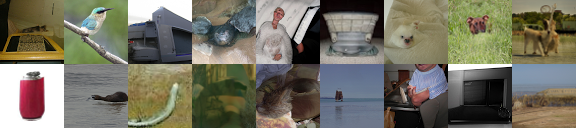}
    \vskip 0.1in
    \includegraphics[width=0.9\textwidth]{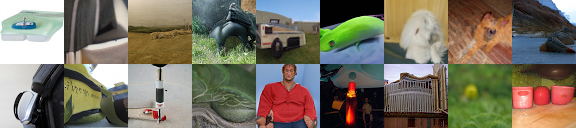}
    \vskip 0.1in
    \includegraphics[width=0.9\textwidth]{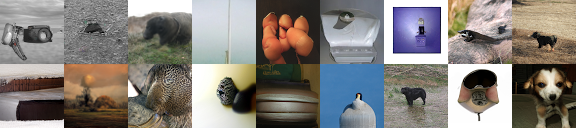}
    \caption{More unconditional samples from MD4 trained on ImageNet 64$\times$64. }
    \label{fig:imagenet64-app}
\end{figure}

\subsection{Additional unconditional generation from MD4 trained on OpenWebText}
\label{app:owt-uncond}

Below we include two unconditioned text samples generated by our MD4 Medium model trained on OpenWebText.

\subsubsection{MD4-M unconditional sample 1: 1024 tokens}
\begin{verbatim}
like, I don't have to be alive? Sometimes there are things that are too real
and you're really supposed to experience them. So that's a good feeling. 
That is the scary thing. Not actually, being able to experience things, being
able to do these things, when you're doing them, which, for most people 
having to wake in a dream is something that seems the most significant, and then
you think about it the next day. It's like the hope of the future, 
and you wake up right now thinking about it. What happens is,, then you
have to stop and think about it and then all of a sudden, somebody always
says, "You're dreaming."

And sometimes I wonder if this is a good time to teach your gut instincts to 
your actors when you're doing a show like this. Because even on this particular 
show, it feels like everyone's been through this all the time before, if even
a few years ago. I mean, if you're doing a show together, at least not on 
continuous development, you you're a vet. I mean, you should really be along.
If you're not sure, well --

VS: I'm working on that one.

Did any of you guys feel that an instinct could work? I thought, "Well, because
you didn't do 'Deadwood' you should stop doing this." But when I read the story
for the first time, I thought, "I think this is going to work." What I can't 
picture is a way to hold this apart.

VS: That's me. It's what we have to do. So do we. When we wrote the first episode,
we wrote a script that we felt like me and myself would want to see. I knew that I
wanted to be able to be in something -- and I wanted to be able to take refuge in
something that was real, that you could see and just really step out of yourself. 
And then I saw it. Then, you get rehearsing it and doing it. And then I actually
started shooting. I think I knew I didn't think it was going to be good. But,
I know it was good. And now people are talked about because it's not good enough.

Growing up, you say that you just completely hated the show, "Lost." Isn't that
what you wish for at the end of the day?

VS: I don't like the concept.

And so there's a lot that you don't know about that, so I think for me to have had
these ideas, if you didn't understand even that it was coming out of this world 
that doesn't exist, we might never get together.

It's so weird. This happened to happen at the same time?

VS: Yes. It happened to happen at basically the same time.

Nobody's even had a show or had a movie/come out of the movie, but ...

VS: If I'm going to pretend I'm definitely not you and have to live through that
stuff, I don't think I'm going to swallow that. I didn't expect it to do quite
that long.

There are always things now that happen with 'Deadwood' where you don't know where
it's going to end up next time, but I think there are occasions now where we have
to keep the fight, even if 'Lost' was pretty consistent in the mindset and the form.

VS: I'm glad that we did fight the odds, because we should have understood that
there was a direct link. But there was almost a sense of not that we had showed up
on the same day, we know we work in the same pieces, but a lot of stuff we don't
know about. Some of it, we need to deal with. We also just have to accept the 
language, and there are a lot of things where we take from them and we do this 
what they did  because we want to
\end{verbatim}

\subsubsection{MD4-M unconditional sample 2: 1024 tokens}
\begin{verbatim}
the groups let recreational vehicles use the three roads that will stay open in 
the meantime of fighting off the permit. "The purpose of the permit is to make 
sure that we work with  the NPS and made roadways and rest areas. We're not just
scaring guys kind of messing around." Community plans to build an urban bike 
facility marched forward at the ongoing staff meeting of the King County
Commission.

Trail will be finished just south of the Greenview 5.

Instead of continuing with a pedestrian and bike trail to the MBTA's campus, these
two trails could bridle the areas from Market to 14 and carry communities closer.

"This project will provide a car-free path to King County," said Andrew Weed. It's
been put the brakes on in the past several months, but there are those residents 
still skeptical.

"I've addressed some of the community concerns that've been raised. They've 
expressed some of their concerns. I don’t think it's terribly reasonable from a 
transportation standpoint."

The trail had been set up to meet on for more than a year when the council 
approved funding for a different proposal.

Mayor Muriel Bowser said after meetings with Commissioner Bushell on Thursday that
the new plan will be on board in December.

"There’s enough of a finish for this project to roll out on time, and we’re going
to get it done," Bowser said.

For the public, the campaign appears over.

“There was one meeting that I feel like I lost at last night's meeting," said 
Shelley Potts, a local resident.

Local resident Joel Grimy, who lives on Uman Road, met residents there as well.

And in other groups that rode through Mayor assistant Stacey Land and even her son
held fliers saying to look for light sign, and also met with Bowser’s son, Deion 
Bowser, about a future plan to also have a dog park on the transit corridor.

Advocates at Brickley’s event, many one waited at least 11 minutes in during the
start of the public meeting, said they expect at least another month from the 
Board of Commissioners, even after a public hearing on Nov. 13.

"We've been trying to be a talkative board where we are meeting in advance, being
respectful of folks," Bowser said.

He considered that the proposal for the section of trail between the Greenview 5 
and 3 “has to move on a schedule. We have other historic preservation projects 
that would take over that.”

But Chad Routledge, a local advocate of the project, spoke out against the mayor’s 
plan.

“The mayor has sent a new meeting to the public using the same route that resulted 
from the loud criticism and onslaught of complaints from the community committee 
back during the public hearing,” Routledge said.

The BDC doesn’t have a particular plan-turns around for the end of the planned 
path, and says “nothing practical can happen right now.” But, she said the agency 
still "looking to make investments in facilities along the route."

And still there is another part of the trail that might be just as much a wish for 
the dogs, as cars: the district wants to go west foot a couple blocks south, to 
make the trail safer for dogs.

“I feel that the accessibility of the trail is pretty important. I think the 
education of the trail, and the uses along different routes are very important 
pieces of a balanced outcome,” said Bushell.

Trams coming off Route 1 
\end{verbatim}

\subsection{Conditional generation from MD4 trained on OpenWebText}

We share conditionally generated text samples by MD4 Medium in \Cref{fig:conditional-sample-generations} and observe that slow unmasking near $t=1$,  enabled by the cosine schedule, tends to help produce more consist and meaningful samples than uniform unmasking counterpart.
\begin{figure}[h]
    \centering
    \includegraphics[trim={0 3cm 0 3cm},clip,width=\textwidth]{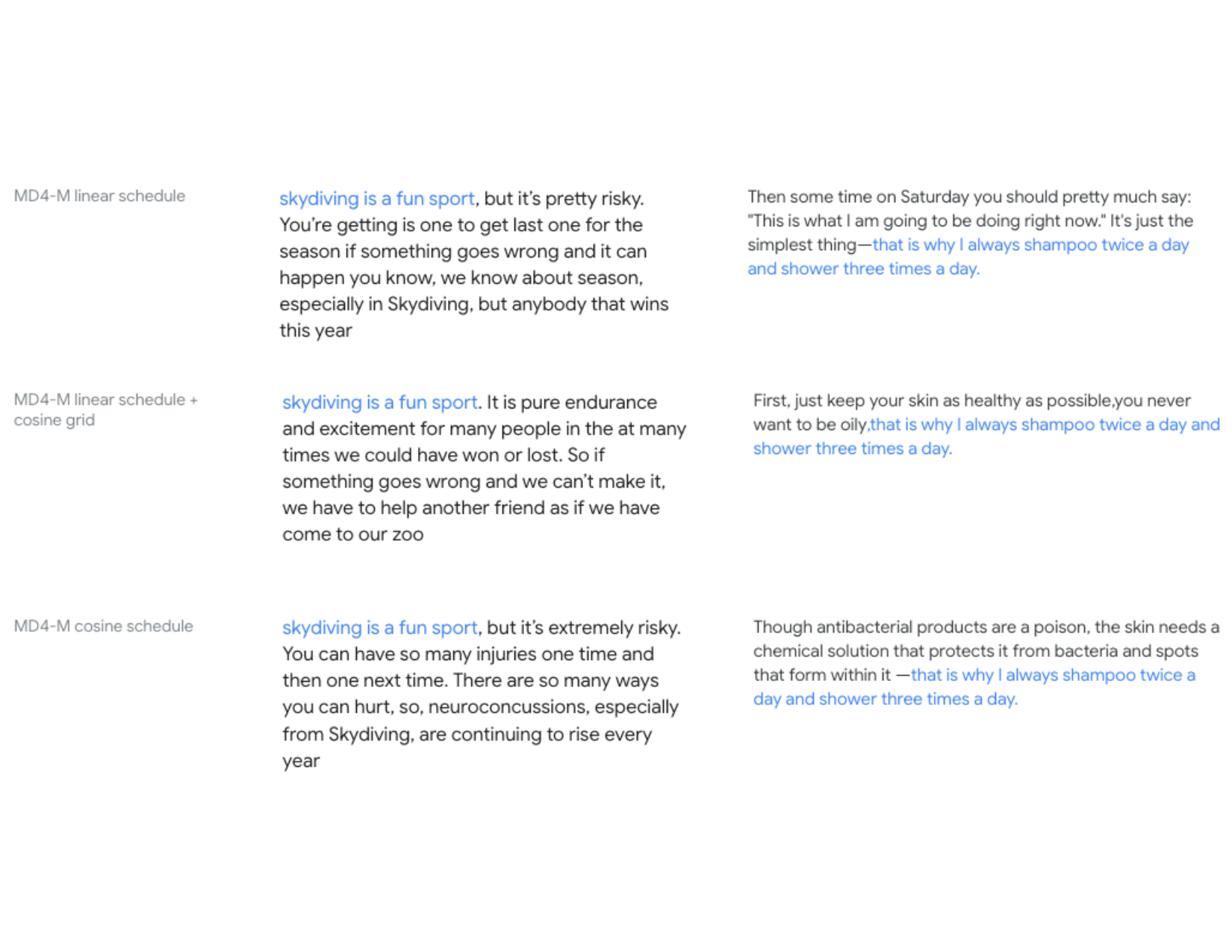}
    \caption{Conditionally generated text samples from MD4-M. Top: MD4-M trained with the linear schedule, sampled with a uniform grid; Middle: MD4-M trained with the linear schedule, sampled with the cosine grid; Bottom: MD4-M trained with the cosine schedule, sampled with a uniform grid. Context text shown in blue, model-generated text in black.}
    \label{fig:conditional-sample-generations}
\end{figure}

\subsection{Effect of discretization on zero-shot perplexity}

We carried out ablation study on the effect of discretization on zero-shot perplexity. 
Results are included in \Cref{tab:continuous_vs_discrete}. 
Note that this is an inference ablation with the same trained model (MD4-S trained with the continuou-time objective).

\begin{table}[h]
\caption{Effect of discretization on zero-shot perplexity.}
\label{tab:continuous_vs_discrete}
\vskip 0.1in
\footnotesize
\centering
\begin{tabular}{llrrrrr}
Size & Timesteps %
& LAMBADA & WikiText2 & PTB & WikiText103 & IBW \\
\midrule %
Small & T = 100 %
& $\le$ 49.8 & $\le$ 36.1 & $\le$ 105.2 & $\le$ 36.1 & $\le$ 70.3\\ 
    & T = 1000 %
        & $\le$ 48.5 & $\le$ 35.0 & $\le$ 102.5 & $\le$ 35.0 & $\le$ 68.4\\  
    & T = 10000 %
      & $\le$ 48.4 & $\le$ 34.9 & $\le$ 102.4 & $\le$ 34.9 & $\le$ 68.2\\ 
    & T = $\infty$ (continuous) %
      & $\le$ 48.4 & $\le$ 34.9 & $\le$ 102.3 & $\le$ 35.9 & $\le$ 68.1\\    
\bottomrule %
\end{tabular}
\end{table}

\end{document}